\icmltitlerunning{Binary Classification from Multiple Unlabeled Datasets via Surrogate Set Classification}
\newtheorem{theorem}{Theorem}
\newtheorem{lemma}[theorem]{Lemma}
\DeclareMathOperator*{\argmax}{argmax}
\DeclareMathOperator*{\argmin}{argmin}
\setlist{nosep}
\DeclareMathOperator{\sign}{\mathrm{sign}}
\newcommand{\pr}{\mathrm{Pr}}
\newcommand{\cG}{\mathcal{G}}
\begin{document}
\twocolumn[
\icmltitle{Binary Classification from Multiple Unlabeled Datasets\\ via Surrogate
Set Classification}



\icmlsetsymbol{equal}{*}

\begin{icmlauthorlist}
\icmlauthor{Nan Lu}{equal,to,goo}
\icmlauthor{Shida Lei}{equal,to}
\icmlauthor{Gang Niu}{goo}
\icmlauthor{Issei Sato}{to,goo}
\icmlauthor{Masashi Sugiyama}{goo,to}
\end{icmlauthorlist}

\icmlaffiliation{to}{The University of Tokyo, Tokyo, Japan}
\icmlaffiliation{goo}{RIKEN, Tokyo, Japan}

\icmlcorrespondingauthor{Nan Lu}{lu@ms.k.u-tokyo.ac.jp}
\icmlcorrespondingauthor{Shida Lei}{leishida@is.s.u-tokyo.ac.jp}

\icmlkeywords{Machine Learning, ICML}

\vskip 0.3in
]



\printAffiliationsAndNotice{\icmlEqualContribution} 

\begin{abstract}
To cope with high annotation costs, training a classifier only from \emph{weakly supervised data} has attracted a great deal of attention these days. 
Among various approaches, strengthening supervision from completely unsupervised classification is a promising direction, which typically employs \emph{class priors} as the only supervision and trains a
binary classifier from \emph{unlabeled}~(U) datasets. 
While existing \emph{risk-consistent} methods are theoretically grounded with high flexibility, they can learn only from \emph{two} U sets.
In this paper, we propose a new approach for binary classification from $m$ U sets for $m\ge2$.
Our key idea is to consider an auxiliary classification task called \emph{surrogate set classification}~(SSC), which is aimed at predicting from which U set each observed sample is drawn.
SSC can be solved by a standard (multi-class) classification method, and we use the SSC solution to obtain the final binary classifier through a certain linear-fractional transformation. 
We built our method in a flexible and efficient \emph{end-to-end} deep learning framework and prove it to be \emph{classifier-consistent}. 
Through experiments, we demonstrate the superiority of our proposed method over state-of-the-art methods.
\end{abstract}

\section{Introduction}
Deep learning with large-scale supervised training data has shown great success on various tasks \cite{goodfellow16DL}.  
However, in practice, obtaining \emph{strong supervision}, e.g., the complete ground-truth labels, for big data is very costly due to the expensive and time-consuming manual annotations \cite{zhou2018brief}.
Thus, it is desirable for machine learning techniques to work with \emph{weaker forms of supervision}, such as noisy labels \cite{natarajan13nips, patrini17cvpr, van2017theory, han2018co, han2020sigua, fang2020rethinking, xia2020part}, partial labels \cite{cour2011learning, ishida2017learning, ishida19icml, feng2020provably, lv2020progressive}, and pairwise comparison information \cite{bao18icml, xu2019uncoupled, feng2020pointwise}.

This paper focuses on a challenging setting which we call \emph{U$^m$ classification}: the goal is to learn a binary classifier from $m$ ($m\ge2$) sets of U data with different \emph{class priors}, i.e., the proportion of positives in each U set.
Such a learning scheme can be conceivable in many real-world scenarios.
For example, U sets with different class priors can be naturally collected from spatial or temporal differences.
Considering morbidity rates, they can be potential patient data collected from different areas \citep{croft2018urban}. 
Likewise, considering approval rates, they can be unlabeled voter data collected in different years \citep{newman2003integrity}. 
In such cases, individual labels are often not available due to privacy reasons, but the corresponding class priors of U sets, i.e., the morbidity rates or approval rates in the aforementioned examples, can be obtained from related medical reports or pre-existing census \cite{quadrianto09jmlr,ardehaly2017mining,tokunaga2020negative}, and is the unique weak supervision that will be leveraged in this work.

Breakthroughs in U$^m$ classification research were brought by \citet{menon15icml} and \citet{lu2018minimal} in proposing the \emph{risk-consistent} methods given two U sets.
Recently, \citet{scott2020learning} extended them to incorporate multiple U sets by two steps: firstly, pair all the U sets so that they are sufficiently different in each pair; secondly, linearly combine the unbiased balanced risk estimators obtained from each pair.
Although this method is advantageous since it is compatible with any model and stochastic optimizer, and is statistically consistent, there are several issues that may limit its potential for practical use:
first, the computational complexity for the optimal pairing strategy is $O(m^3)$ for $m$ U sets \cite{edmonds1972theoretical}, which cannot work efficiently with a large number of U sets;
second, the optimal combination weights are proved with strong model assumptions and thus remaining difficult to be tuned in practice.

Now, a natural question arises: can we propose a computationally efficient method for U$^m$ classification with both \emph{flexibility} on the choice of models and optimizers and \emph{theoretical guarantees}?
The answer is affirmative.

In this paper, we provide a new approach for U$^m$ classification by solving a Surrogate Set Classification task~(U$^m$-SSC).
More specifically, we regard the index of each U set as a \emph{surrogate-set label} and consider the supervised multi-class classification task of predicting the surrogate-set labels given observations.
The difficulty is how to link our desired binary classifier with the learned surrogate multi-class classifier.
To solve it, we theoretically bridge the original and surrogate class-posterior probabilities with a linear-fractional transformation, and then implement it by adding a transition layer to the neural network so that the trained model is guaranteed to be a good approximation of the original class-posterior probability.
Our proposed U$^m$-SSC scheme is built within an end-to-end framework, which is computationally efficient, compatible with any model architecture and stochastic optimization, and naturally incorporates multiple U sets.
Our contributions can be summarized as follows:
\begin{itemize}
\item Theoretically, we prove that the proposed U$^m$-SSC method is \emph{classifier-consistent} \cite{patrini17cvpr,lv2020progressive}, i.e., the classifier learned by solving the surrogate set classification task from multiple sets of U data converges to the optimal classifier learned from fully supervised data under mild conditions.
Then we establish an \emph{estimation error bound} of our method.
\item Practically, we propose an easy-to-implement, flexible, and computationally efficient method for U$^m$ classification, which is shown to outperform the state-of-the-art methods in experiments.
We also verify the robustness of the proposed method by simulating U$^m$ classification in the wild, 
e.g., on varied set sizes, set numbers, noisy class priors, 
and the results are promising.
\end{itemize}
Our method provides new perspectives of solving the U$^m$ classification problem, and is more suitable to be applied in practice given its theoretical and practical advantages.
\vspace{-0.5em}%

\section{Problem Setup and Related Work}
\label{sec:pre}%
In this section, we introduce some notations, formulate the U$^m$ classification problem, and review the related work.

\begin{table*}[t]
\caption{\centering Comparisons of the proposed method with previous works in the U$^m$ classification setting.}
\label{methods}
\vspace{1ex}%
\begin{center}
\newcommand{\tabincell}[2]{\begin{tabular}{@{}#1@{}}#2\end{tabular}}
\begin{tabular}{c|c c c c c}
\hline
Methods & \tabincell{c}{Deal with\\ 2+ sets} & \tabincell{c}{Theoretical \\ guarantee} & \tabincell{c}{No negative \\ training risk} & \tabincell{c}{Pre-computing \\ complexity} & \tabincell{c}{Risk \\ Measure}\\
\hline
$\widehat{R}_{\rm{U^2}}(f)$ \cite{lu2018minimal} & $\times$ & $\checkmark$ & $\times$ & $O(1)$ & Classification risk \eqref{Rpn}\\
$\widehat{R}_{\rm{U^2}\text{-}\rm{b}}(f)$ \cite{menon15icml} & $\times$ & $\checkmark$ & $\times$ & $O(1)$ & Balanced risk \eqref{balrisk}\\      
$\widehat{R}_{\rm{U^2}\text{-}\rm{c}}(f)$ \cite{lu2020mitigating} & $\times$ & $\checkmark$ & $\checkmark$ & $O(1)$ & Classification risk \eqref{Rpn}\\ 
$\widehat{R}_{\rm{U^m}}(f)$ \cite{scott2020learning} & $\checkmark$ & $\checkmark$ & $\times$ & $O(m^3)$ & Balanced risk \eqref{balrisk}\\ 
$\widehat{R}_{\rm{prop}\text{-}\rm{c}}(f)$ \cite{tsai2020learning} & $\checkmark$ & $\times$ & $\checkmark$ & $O(1)$ & Proportion risk \eqref{proprisk}\\ 
\hline
\textbf{Proposed} & $\checkmark$ & $\checkmark$ & $\checkmark$ & $O(1)$ & Classification risk \eqref{Rpn}\\ 
\hline
\end{tabular}
\end{center}
\vspace{-0.5em}
\end{table*}
\vspace{-0.5em}%
\subsection{Learning from Fully Labeled Data}
Let $\mathcal{X}$ be the input feature space and $\mathcal{Y}=\{+1, -1\}$ be a binary label space,  $\boldsymbol{x} \in \mathcal{X}$ and $y \in \mathcal{Y}$ be the input and output random variables following an underlying joint distribution $\mathcal{D}$.
Let $f : \mathcal{X} \rightarrow \mathbb{R}$ be an arbitrary binary classifier, and $\ell_\mathrm{b}(t, y):\mathbb{R} \times \mathcal{Y} \rightarrow \mathbb{R}_+$ be a \emph{loss function} such that the value $\ell_\mathrm{b}(t, y)$ means the loss by predicting $t$ when the ground-truth is $y$. 
The goal of binary classification is to train a classifier $f$ that minimizes the \emph{risk} defined as
\begin{align}
R(f)&=\mathbb{E}_{(\boldsymbol{x},y)\sim\mathcal{D}}[\ell_\mathrm{b}(f(\boldsymbol{x}),y)]
\label{Rpn}
\end{align}
where $\mathbb{E}$ denotes the expectation.
For evaluation, $\ell_\mathrm{b}$ is often chosen as $\ell_\mathrm{01}(t,y)=(1-\sign((t-\frac{1}{2})\cdot y))/2$ and then the risk $R$ becomes the standard performance measure for classification, a.k.a. the \emph{classification error}.
For training, $\ell_\mathrm{01}$ is replaced by a \emph{surrogate loss},%
\footnote{The surrogate loss $\ell_\mathrm{s}$ should be \emph{classification-calibrated} so that the predictions can be the same for classifiers learned by using $\ell_\mathrm{s}$ and $\ell_\mathrm{01}$ \cite{bartlett06jasa}.} e.g., the logistic loss $\ell_\mathrm{log}(t,y)=\ln(1+\exp(-t\cdot y))$, since $\ell_\mathrm{01}$ is discontinuous and therefore difficult to optimize \citep{bendavid06jcss}.


In most cases, $R$ cannot be calculated directly because the joint distribution $\mathcal{D}$ is unknown to the learner.
Given the labeled training set $\mathcal{X}=\{(\boldsymbol{x}_i, y_i)\}_{i = 1}^n\stackrel{\mathrm{i.i.d.}}{\sim}\mathcal{D}$ with $n$ samples, \emph{empirical risk minimization}~(ERM) \cite{vapnik98SLT} is a common practice that computes an approximation of $R$ by
        \begin{equation}
            \widehat{R}(f)=\frac{1}{n}\sum\nolimits_{i=1}^n \ell_\mathrm{b}(f(\boldsymbol{x}_i),y_i). \label{empirical_risk}
        \end{equation}

\subsection{Learning from Multiple Sets of U Data}
Next, we consider U$^m$ classification.
We are given $m(m\geq 2)$ sets of unlabeled samples drawn from $m$ marginal densities $\{p_{\rm{tr}}^{j}(\boldsymbol{x})\}_{j=1}^{m}$, where
\begin{align}
p_{\rm{tr}}^{j}(\boldsymbol{x})=\pi_j p_\mathrm{p}(\boldsymbol{x}) + (1-\pi_j)p_\mathrm{n}(\boldsymbol{x}),
\label{ptr}
\end{align}
each $p_{\rm{tr}}^{j}(\boldsymbol{x})$ is seen as a mixture of the positive and negative class-conditional densities $(p_\mathrm{p}(\boldsymbol{x}), p_\mathrm{n}(\boldsymbol{x}))=(p(\boldsymbol{x}| y=+1), p(\boldsymbol{x}| y=-1))$, and $\pi_j=p_{\rm{tr}}^{j}(y = +1)$ denotes the class prior of the $j$-th U set.
Note that given only U data, it is \emph{theoretically impossible} to learn the class priors without any assumptions \cite{menon15icml}, so we assume all necessary class priors are given, which are the only \emph{weak supervision} we will leverage.%
\footnote{By introducing the \emph{mutually irreducible assumption} \cite{scott2013classification}, the class priors become identifiable and can be estimated in some cases, see \citet{menon15icml}, \citet{liu16tpami}, \citet{jain2016estimating}, and \citet{yao2020dual} for details.}
To make the problem mathematically solvable, among the $m$ sets of U data, we also assume that at least two of them are different, i.e., $\exists j, j' \in \{1,\ldots,m\}$ such that $j \neq j' $ and $\pi_j \neq \pi_{j'}$.

In contrast to supervised classification where we have a fully labeled training set $\mathcal{X}$ directly drawn from $\mathcal{D}$, now we only have access to $m$ sets of U data $\mathcal{X}_{\rm{tr}}=\{\mathcal{X}_{\rm{tr}}^{j}\}_{j=1}^{m}$, where
\begin{equation}
      \mathcal{X}_{\rm{tr}}^{j} = \{\boldsymbol{x}_{1}^j,\ldots, \boldsymbol{x}_{n_j}^j\} \overset{\rm{i.i.d.}}{\sim} p_{\rm{tr}}^{j}(\boldsymbol{x}),
\label{kusets}
\end{equation}
and $n_j$ denotes the sample size of the $j$-th U set.
But our goal is still the same as supervised classification: to obtain a binary classifier that generalizes well with respect to $\mathcal{D}$, despite the fact that it is unobserved.

\subsection{Related Work}
Here, we review some related works for U$^m$ classification.

\paragraph{Clustering methods}
Learning from only U data is previously regarded as
\emph{discriminative clustering} \citep{xu04nips,gomes10nips}.
However, these methods are often suboptimal since they rely on a critical assumption that \emph{one cluster exactly corresponds to one class}, 
and hence even perfect clustering may still result in poor classification.
As a consequence, we prefer ERM to clustering.

\paragraph{Proportion risk methods}
The U$^m$ classification setting is also related to \emph{learning with label proportions}~(LLP), with a subtle difference in the experimental design.%
\footnote{The majority of LLP papers use uniform sampling for bag generation, which may result in the same label proportion for all the U sets and make the LLP problem computationally intractable \cite{scott2020learning}.
Our simulation in Sec.~\ref{sec:exp} avoids the issue.}
However, most LLP methods are not ERM-based, but based on the following \emph{empirical proportion risk}~(EPR) \cite{yu2014learning}:
\begin{align}
\widehat{R}_{\rm{prop}}(f)=\sum\nolimits_{j=1}^{m}d_{\rm{prop}}(\pi_j, \hat{\pi}_j),
\label{proprisk}%
\end{align}
where $\pi_j$ and
$\hat{\pi}_j=\frac{1}{n_j}\sum_{i=1}^{n_j}(1+\sign(f(\boldsymbol{x}_i^j)-1/2))/2$
are the true and predicted label proportions for the $j$-th U set $\mathcal{X}_{\rm{tr}}^{j}$, and $d_{\rm{prop}}$ is a distance function.
State-of-the-art method in this line combined EPR with consistency regularization and proposed the following learning objective:
\begin{align}
\widehat{R}_{\rm{prop}\text{-}\rm{c}}(f)=\widehat{R}_{\rm{prop}}(f)+\alpha\ell_{\rm{cons}}(f),
\label{llpvat}
\end{align}
where $\ell_{\rm{cons}}(f)=d_{\rm{cons}}(f(\boldsymbol{x}), f(\boldsymbol{\hat{x}}))$ is the consistency loss given a distance function $d_{\rm{cons}}$ and $\boldsymbol{\hat{x}}$ is a perturbed input from the original one $\boldsymbol{x}$ \cite{tsai2020learning}.

\paragraph{Classification risk methods}
A breakthrough of the ERM-based method for U$^m$ classification is \citet{lu2018minimal} which assumed $m=2$ and $\pi_1>\pi_2$, and proposed an equivalent expression of the classification risk \eqref{Rpn}:
\begin{align*}
&R_{\rm{U^2}}(f)=
\underbrace{
\mathbb{E}_{\boldsymbol{x}\sim p_{\rm{tr}}^{1}}
c_1^+\ell_\mathrm{b}(f(\boldsymbol{x}),+1)
-\mathbb{E}_{\boldsymbol{x}\sim p_{\rm{tr}}^{2}}
c_2^+\ell_\mathrm{b}(f(\boldsymbol{x}),+1)}_{R_{\rm{U^2}\text{-}\rm{p}}(f)}\notag\\
&~~~\underbrace{
-\mathbb{E}_{\boldsymbol{x}\sim p_{\rm{tr}}^{1}}
c_1^-\ell_\mathrm{b}(f(\boldsymbol{x}),-1)
+\mathbb{E}_{\boldsymbol{x}\sim p_{\rm{tr}}^{2}}
c_2^-\ell_\mathrm{b}(f(\boldsymbol{x}),-1)}_{R_{\rm{U^2}\text{-}\rm{n}}(f)},
\end{align*}
where $c_1^+=\frac{(1-\pi_2)\pi_{\mathcal{D}}}{\pi_1-\pi_2}$, $c_1^-=\frac{\pi_2(1-\pi_{\mathcal{D}})}{\pi_1-\pi_2}$, $c_2^+=\frac{(1-\pi_1)\pi_{\mathcal{D}}}{\pi_1-\pi_2}$, $c_2^-=\frac{\pi_1(1-\pi_{\mathcal{D}})}{\pi_1-\pi_2}$, and $\pi_{\mathcal{D}}$ denotes the class prior of the test set.
If $\pi_{\mathcal{D}}$ is assumed to be $\frac{1}{2}$ in $R_{\rm{U^2}}(f)$, the obtained $R_{\rm{U^2}\text{-}\rm{b}}(f)$ \cite{menon15icml} corresponds to the balanced risk, a.k.a. the \emph{balanced error} \cite{brodersen10icpr}:
\begin{align}
R_{\rm{b}}(f)&=\frac{1}{2}\mathbb{E}_{\boldsymbol{x}\sim p_\mathrm{p}}[\ell_\mathrm{b}(f(\boldsymbol{x}),+1)]+\frac{1}{2}\mathbb{E}_{\boldsymbol{x}\sim p_\mathrm{n}}[\ell_\mathrm{b}(f(\boldsymbol{x}),-1)],
\label{balrisk}
\end{align}
where $\ell_\mathrm{b}$ is $\ell_\mathrm{01}$.
Note that $R_{\rm{b}}(f) =R(f)$ for any $f$ if and only if $\pi_{\mathcal{D}}=\frac{1}{2}$, which means that it definitely biases learning when $\pi_{\mathcal{D}}\approx\frac{1}{2}$ is not the case.
Given $\mathcal{X}_{\rm{tr}}^{1}$ and $\mathcal{X}_{\rm{tr}}^{2}$, $R_{\rm{U^2}}(f)$ and $R_{\rm{U^2}\text{-}\rm{b}}(f)$ can be approximated by their empirical counterparts $\widehat{R}_{\rm{U^2}}(f)$ and $\widehat{R}_{\rm{U^2}\text{-}\rm{b}}(f)$.

It is shown in \citet{lu2020mitigating} that the empirical training risk $\widehat{R}_{\rm{U^2}}(f)$ can take negative values which causes overfitting, so they proposed a corrected learning objective that wraps the empirical risks of the positive class $\widehat{R}_{\rm{U^2}\text{-}\rm{p}}(f)$ and the negative class $\widehat{R}_{\rm{U^2}\text{-}\rm{n}}(f)$ into some non-negative correction function $f_\mathrm{c}$, such that $f_\mathrm{c}(x)=x$ for all $x\ge0$ and $f_\mathrm{c}(x)>0$ for all $x<0$:
$\widehat{R}_{\rm{U^2}\text{-}\rm{c}}(f)= f_\mathrm{c}(\widehat{R}_{\rm{U^2}\text{-}\rm{p}}(f))+f_\mathrm{c}(\widehat{R}_{\rm{U^2}\text{-}\rm{n}}(f))$.
Note that $\widehat{R}_{\rm{U^2}\text{-}\rm{c}}$ is biased with finite samples, but \citet{lu2020mitigating} showed its risk-consistency, i.e., it converges to the original risk $R$ in \eqref{Rpn} if $n_1,n_2\to\infty$.

Although these risk-consistent methods are advantageous in terms of flexibility and theoretical guarantees, they are limited to $2$ U sets.
Recently, \citet{scott2020learning} extended the previous method for the general $m (m\geq2)$ setting.
More specifically, they assumed the number of sets $m=2k$ and proposed a pre-processing step that finds $k$ pairs of the U sets by solving a maximum weighted matching problem \cite{edmonds1965maximum}.
Then they linearly combine the unbiased balanced risk estimator of each pair.%
\footnote{In \citet{scott2020learning}, it is assumed that $\pi_1\neq\pi_2$ in each pair, which is a stronger assumption than ours.}
The resulted weighted learning objective is given by
\begin{equation}
\widehat{R}_{\rm{U^m}}(f)=\sum\nolimits_{j=1}^{k}\omega_j\widehat{R}_{\rm{U^2}\text{-}\rm{b}}(f).
\label{muub}
\end{equation}
This method is promising but has some practical issues: the pairing step is computationally very inefficient and the weights are hard to tune in practice.

A comparison of the previous works with our proposed method that will be introduced in Sec.~\ref{sec:U$^m$ classification} is given in Table~\ref{methods}.

\section{U$^m$ classification via Surrogate Set Classification}
\label{sec:U$^m$ classification}
In this section, we propose a new ERM-based method for learning from multiple U sets via a surrogate set classification task and analyze it theoretically.
All the proofs are given in Appendix~\ref{sec:proof}.

\subsection{Surrogate Set Classification Task}
The main challenge in the U$^m$ classification problem is that we have \emph{no} access to the ground-truth labels of the training examples so that the empirical risk \eqref{empirical_risk} in supervised binary classification cannot be computed directly.
Our idea is to consider a surrogate set classification task that could be tackled easily from the given U sets.
It serves as a proxy and gives us a classifier-consistent solution to the original binary classification problem.
    


Specifically, denote by $\Bar{y}\in \{1,2,\ldots, m\}$ the index of the U set, i.e., the index of the corresponding marginal density. 
By treating $\bar{y}$ as a \emph{surrogate label}, we formulate the surrogate set classification task as the standard multi-class classification.
Let $\Bar{\mathcal{D}}$ be the joint distribution for the random variables $\boldsymbol{x} \in \mathcal{X}$ and $\bar{y} \in \bar{\mathcal{Y}} = \{1, 2,\ldots,m\}$.
Any $\bar{\mathcal{D}}$ can be identified via the class priors $\{\rho_j=p(\Bar{y}=j)\}_{j=1}^{m}$ and the class-conditional densities $\{p(\boldsymbol{x} \mid\Bar{y}=j)= p_{\rm{tr}}^j(\boldsymbol{x})\}_{j=1}^{m}$, where $\rho_j$ can be estimated by $\rho_j=\frac{n_j}{\sum_{j=1}^{m}{n_j}}$.

The goal of surrogate set classification is to train a classifier $\boldsymbol{g}(\boldsymbol{x}):\mathcal{X}\rightarrow \mathbb{R}^m$ that minimizes the following risk:
    \begin{align}
    \label{surrorisk}
        {R_{\rm{surr}}(\boldsymbol{g})=\mathbb{E}_{(\boldsymbol{x},\bar{y})\sim\bar{\mathcal{D}}}[\ell(\boldsymbol{g}(\boldsymbol{x}),\bar{y})]},
    \end{align}
where $\ell(\boldsymbol{g}(\boldsymbol{x}), \bar{y}): \mathbb{R}^m \times \bar{\mathcal{Y}} \rightarrow \mathbb{R}_+$ is a proper loss for $m$-class classification, e.g., the \emph{cross-entropy loss}:
    \begin{equation} \nonumber
        \ell_{\rm{ce}}({\boldsymbol{g}}(\boldsymbol{x}), \Bar{y})=-\sum_{j=1}^m\boldsymbol{1}(\Bar{y}=j)\log(g_j(\boldsymbol{x}))=-\log(g_{\bar{y}}(\boldsymbol{x})),
    \end{equation}
where $\boldsymbol{1}(\cdot)$ is the indicator function, $g_j(\boldsymbol{x})$ is the $j$-th element of $\boldsymbol{g}(\boldsymbol{x})$, and is a score function that estimates the true class-posterior probability $\Bar{\eta}_j(\boldsymbol{x}) = p(\Bar{y} = j \mid \boldsymbol{x})$.
Typically, the predicted label $\bar{y}_{\rm{pred}}$ takes the form $\bar{y}_{\rm{pred}} = \mathop{\argmax}_{j\in [m]}g_j(\boldsymbol{x})$.

Now the unlabeled training sets given by \eqref{kusets} for the binary classification can be seen as a labeled training set ${\mathcal{X}}_{\rm{tr}} = \{(\boldsymbol{x}_i, \Bar{y}_i)\}_{i=1}^{n_{\rm{tr}}}\stackrel{\mathrm{i.i.d.}}{\sim}\Bar{\mathcal{D}}$ for the $m$-class classification, where $n_{\rm{tr}}=\sum_{j=1}^{m}{n_j}$ is the total number of U data.
We can use $\mathcal{X}_{\rm{tr}}$ to approximate the risk $R_{\rm{surr}}$ by
    \begin{equation}
        \widehat{R}_{\rm{surr}}(\boldsymbol{g})
        =\frac{1}{n_{\rm{tr}}}\sum\nolimits_{i=1}^{n_{\rm{tr}}}{\ell}({\boldsymbol{g}}(\boldsymbol{x}_i),\bar{y}_i).
        \label{surrog}
    \end{equation}

\subsection{Bridge Two Posterior Probabilities}
Let $\eta(\boldsymbol{x}) = p(y=+1 \mid \boldsymbol{x})$ be the class-posterior probability for class $+1$ in the original binary classification problem, and $\Bar{\eta}_j(\boldsymbol{x}) = p(\Bar{y} = j \mid \boldsymbol{x})$ be the class-posterior probability for class $j$ in the surrogate set classification problem. 
We theoretically bridge them by the following theorem.
\begin{theorem} 
\label{thm:transformation}%
By the definitions of $\mathcal{D}$, $\eta(\boldsymbol{x})$, $\bar{\mathcal{D}}$, and $\bar{\eta}_j(\boldsymbol{x})$, we have
\begin{equation}
\Bar{\eta}_j(\boldsymbol{x})=T_j(\eta(\boldsymbol{x})),\quad \forall j = 1,\ldots,m,
\end{equation}
where
\begin{equation} \nonumber
T_j(\eta(\boldsymbol{x}))=\displaystyle\frac{a_j\cdot\eta(\boldsymbol{x})+b_j}{c\cdot\eta(\boldsymbol{x})+d},
\end{equation}
$a_j = {\rho}_j(\pi_j-\pi_{\mathcal{D}})$, $b_j = {\rho}_j\pi_{\mathcal{D}}(1-\pi_j)$, $c = \sum_{j=1}^m{\rho}_j(\pi_j-\pi_{\mathcal{D}})$, and $d = \sum_{j=1}^m{\rho}_j\pi_{\mathcal{D}}(1-\pi_j)$.
\label{theorem1}
\end{theorem}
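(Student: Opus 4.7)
The plan is to apply Bayes' rule in both the surrogate and the original distributions, and then eliminate $p_{\rm p}(\boldsymbol{x})$ and $p_{\rm n}(\boldsymbol{x})$ in favor of $\eta(\boldsymbol{x})$, so that a linear-fractional form emerges naturally and all densities cancel from numerator and denominator.

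First, using Bayes' rule on $\bar{\mathcal{D}}$, I write
\begin{equation}
\bar{\eta}_j(\boldsymbol{x}) \;=\; \frac{\rho_j\, p_{\rm tr}^{j}(\boldsymbol{x})}{\sum_{k=1}^{m} \rho_k\, p_{\rm tr}^{k}(\boldsymbol{x})},\nonumber
\end{equation}
and substitute the mixture decomposition \eqref{ptr} so that each $p_{\rm tr}^{j}(\boldsymbol{x})$ becomes an affine combination of $p_{\rm p}(\boldsymbol{x})$ and $p_{\rm n}(\boldsymbol{x})$ with coefficients $\pi_j$ and $1-\pi_j$. Next, I invoke Bayes' rule on $\mathcal{D}$ with the test class prior $\pi_{\mathcal{D}}$ to express the class-conditionals in terms of $\eta$: from $\eta(\boldsymbol{x}) = \pi_{\mathcal{D}}\, p_{\rm p}(\boldsymbol{x})/p(\boldsymbol{x})$ and $1-\eta(\boldsymbol{x}) = (1-\pi_{\mathcal{D}})\, p_{\rm n}(\boldsymbol{x})/p(\boldsymbol{x})$, I get $p_{\rm p}(\boldsymbol{x}) = \eta(\boldsymbol{x})\, p(\boldsymbol{x})/\pi_{\mathcal{D}}$ and $p_{\rm n}(\boldsymbol{x}) = (1-\eta(\boldsymbol{x}))\, p(\boldsymbol{x})/(1-\pi_{\mathcal{D}})$.

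Plugging these into $p_{\rm tr}^{j}(\boldsymbol{x})$ and pulling out the common factor $p(\boldsymbol{x})/[\pi_{\mathcal{D}}(1-\pi_{\mathcal{D}})]$ gives
\begin{equation}
p_{\rm tr}^{j}(\boldsymbol{x}) \;=\; \frac{p(\boldsymbol{x})}{\pi_{\mathcal{D}}(1-\pi_{\mathcal{D}})}\,\bigl[\pi_j(1-\pi_{\mathcal{D}})\eta(\boldsymbol{x}) + (1-\pi_j)\pi_{\mathcal{D}}(1-\eta(\boldsymbol{x}))\bigr].\nonumber
\end{equation}
A short algebraic simplification of the bracketed term collapses the $\pi_j \pi_{\mathcal{D}}$ cross-terms and yields $(\pi_j - \pi_{\mathcal{D}})\,\eta(\boldsymbol{x}) + \pi_{\mathcal{D}}(1-\pi_j)$. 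Substituting into the Bayes quotient above, the prefactor $p(\boldsymbol{x})/[\pi_{\mathcal{D}}(1-\pi_{\mathcal{D}})]$ appears in both numerator and denominator and cancels, leaving precisely the linear-fractional form $\bar{\eta}_j(\boldsymbol{x}) = (a_j \eta(\boldsymbol{x}) + b_j)/(c\,\eta(\boldsymbol{x}) + d)$ with the stated coefficients.

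The proof is essentially routine once this change of variables is set up; there is no real obstacle beyond the bookkeeping. The only subtle point is recognizing that one must introduce $\pi_{\mathcal{D}}$ (the test class prior) explicitly in order to re-parameterize $p_{\rm p},p_{\rm n}$ in terms of $\eta$, since $\eta$ itself depends on $\pi_{\mathcal{D}}$ through the test marginal $p(\boldsymbol{x})$; after that, the symmetry of the expression guarantees that $p(\boldsymbol{x})$ drops out and the normalization constants fall into place. I would also briefly note that $c\,\eta(\boldsymbol{x}) + d > 0$ under the standing assumption that at least two $\pi_j$'s differ (together with $\pi_{\mathcal{D}}\in(0,1)$), so that $T_j$ is well defined.
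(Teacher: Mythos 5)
Your proof is correct and follows essentially the same route as the paper's: write $\bar{\eta}_j$ via Bayes' rule as $\rho_j p_{\rm tr}^j(\boldsymbol{x})/\sum_k \rho_k p_{\rm tr}^k(\boldsymbol{x})$, re-express $p_{\rm p}, p_{\rm n}$ in terms of $\eta$ and $\pi_{\mathcal{D}}$ by Bayes' rule on $\mathcal{D}$, and cancel the common factor $p(\boldsymbol{x})/[\pi_{\mathcal{D}}(1-\pi_{\mathcal{D}})]$; the algebraic collapse of the bracketed term to $(\pi_j-\pi_{\mathcal{D}})\eta(\boldsymbol{x})+\pi_{\mathcal{D}}(1-\pi_j)$ checks out. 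Your closing remark on the positivity of $c\,\eta(\boldsymbol{x})+d$ is a welcome addition that the paper defers to the proof of Lemma~\ref{lemma_injective}.
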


Such a relationship has been previously studied by \citet{menon15icml} in the context of \emph{corrupted label learning} for a specific $2 \times 2$ case, i.e., 2 clean classes are transformed to 2 corrupted classes, and they used $T_j(\cdot)$ to post-process the threshold of the score function learned from corrupted data.
Our proposal can be regarded as its extension to a general $2 \times m$ case and $T_j(\cdot)$ is used to connect the original binary classifier with the surrogate multi-set-class classifier.

Let $\boldsymbol{T}(\cdot): \mathbb{R} \rightarrow \mathbb{R}^m$ be a vector form of the transition function $\boldsymbol{T}(\cdot) = [T_1(\cdot),\ldots,T_m(\cdot)]^\top$.
Note that the coefficients in $T_j(\cdot)$ are all constants and $\boldsymbol{T}(\cdot)$ is deterministic.
Next, we study properties of the transition function $\boldsymbol{T}$ in the following lemma, which implies the feasibility of approaching $\eta(\boldsymbol{x})$ by means of estimating $\Bar{\eta}_j(\boldsymbol{x})$. 
\begin{lemma}
The transition function $\boldsymbol{T}(\cdot)$ is an injective function in the domain $[0, 1]$.
\label{lemma_injective}
\end{lemma}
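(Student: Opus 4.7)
The plan is to prove injectivity of the vector-valued map $\boldsymbol{T}$ by exhibiting a single coordinate $T_{j^\star}$ that is already an injection on $[0,1]$; since $\boldsymbol{T}(\eta_1)=\boldsymbol{T}(\eta_2)$ forces equality in every coordinate, this suffices.

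First I would tidy up the constants. Using $\sum_{j=1}^{m}\rho_j=1$, the quantities $c$ and $d$ collapse to $c=\bar\pi-\pi_{\mathcal{D}}$ and $d=\pi_{\mathcal{D}}(1-\bar\pi)$, where $\bar\pi:=\sum_{j=1}^{m}\rho_j\pi_j$ is the $\rho$-weighted average class prior. Then I would check that each $T_j$ is well defined on $[0,1]$: the denominator $c\eta+d$ is linear in $\eta$ and takes the values $\pi_{\mathcal{D}}(1-\bar\pi)>0$ at $\eta=0$ and $\bar\pi(1-\pi_{\mathcal{D}})>0$ at $\eta=1$ (using $\pi_{\mathcal{D}},\bar\pi\in(0,1)$), so it is strictly positive throughout $[0,1]$.

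Next I would analyse a single coordinate. Assuming $T_j(\eta_1)=T_j(\eta_2)$ and cross-multiplying (legitimate by the previous step), the mixed $\eta_1\eta_2$ and constant terms cancel and one is left with the clean identity
\begin{equation*}
(a_j d - b_j c)(\eta_1-\eta_2)=0.
\end{equation*}
Thus $T_j$ is injective on $[0,1]$ if and only if $a_j d - b_j c\neq 0$. A short algebraic simplification, which I expect to be the only non-trivial calculation, then gives
\begin{equation*}
a_j d - b_j c \;=\; \rho_j\,\pi_{\mathcal{D}}(1-\pi_{\mathcal{D}})(\pi_j-\bar\pi).
\end{equation*}

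Finally I would invoke the standing assumption from Section~\ref{sec:pre} that at least two class priors differ. Since $\bar\pi$ is a convex combination of the $\pi_j$'s with all weights $\rho_j>0$, not all $\pi_j$ can equal $\bar\pi$, so there is some index $j^\star$ with $\pi_{j^\star}\neq\bar\pi$. Combined with $\rho_{j^\star}>0$ and $\pi_{\mathcal{D}}\in(0,1)$, this gives $a_{j^\star}d-b_{j^\star}c\neq 0$; hence $T_{j^\star}$, and therefore $\boldsymbol{T}$, is injective on $[0,1]$. The main obstacle is the algebraic simplification of $a_j d-b_j c$; it appears to depend on every $\pi_k$ through $\bar\pi$, but the symmetry in the definitions of $c$ and $d$ makes most terms cancel, leaving a factor $(\pi_j-\bar\pi)$ that connects injectivity directly to the hypothesis of non-identical class priors.
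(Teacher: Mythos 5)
Your proposal is correct and follows essentially the same route as the paper's proof: establish positivity of the common denominator $c\eta+d$ on $[0,1]$, cross-multiply to reduce injectivity of $T_j$ to $a_jd-b_jc\neq 0$, and invoke the assumption that not all class priors coincide to find a coordinate $j^\star$ where this holds. Your explicit factorization $a_jd-b_jc=\rho_j\pi_{\mathcal{D}}(1-\pi_{\mathcal{D}})(\pi_j-\bar\pi)$ is a cleaner (and correctly computed) version of the paper's characterization of when this quantity vanishes, but the underlying argument is the same.
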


\subsection{Classifier-consistent Algorithm}
Given the transition function $\boldsymbol{T}$, we have two choices to obtain $\eta(\boldsymbol{x})$ from $\Bar{\eta}_j(\boldsymbol{x})$. 
First, one can estimate $\Bar{\eta}_j(\boldsymbol{x})$, then calculate $\eta(\boldsymbol{x})$ via the inverse function $T_j^{-1}(\Bar{\eta}_j(\boldsymbol{x}))$. 
Second, one can encode $\eta(\boldsymbol{x})$ as a latent variable into the computation of $\Bar{\eta}_j(\boldsymbol{x})$ and obtain both of them simultaneously. 
We prefer the latter for three reasons.
\begin{itemize}
\item Computational efficiency: the latter is a one-step solution and avoids additional computations of the inverse functions, which provides computational efficiency and easiness for implementation.
\item Robustness: since the coefficients of $T_j(\cdot)$ may be perturbed by some noise in practice, its inversion in the former method may enlarge the noise by orders of magnitude, making the learning process less robust.
\item Identifiability:
calculating $T_j^{-1}(\Bar{\eta}_j(\boldsymbol{x}))$ in the former method for all $j=\{1,\ldots,m\}$ induces $m$ estimates of $\eta(\boldsymbol{x})$, and they are usually non-identical due to the estimation error of $\Bar{\eta}_j(\boldsymbol{x})$ from finite samples or noisy $T_j(\cdot)$, causing a new non-identifiable problem.
\end{itemize}

    \begin{algorithm}[t]
    \caption{U$^m$-SSC based on stochastic optimization}
     \label{alg}
    \hspace*{0.02in} {\bf Input:}
    Model $f$, $m$ sets of unlabeled data $\mathcal{X}_{\rm{tr}}$, class priors $\{\pi_j\}_{j=1}^m$ and $\pi_{\mathcal{D}}$
    \begin{algorithmic}[1]
    \STATE Compute $a_j$, $b_j$, $c$ and $d$ of $\boldsymbol{T}(\cdot) = [T_1(\cdot),\ldots,T_m(\cdot)]^\top$ in Theorem~\ref{theorem1} using $\{\pi_j\}_{j=1}^m$ and $\pi_{\mathcal{D}}$.
    \STATE Let $\boldsymbol{g}=\boldsymbol{T}(f)$ and $\mathcal{A}$ be a SGD-like optimizer working on $\boldsymbol{g}$.
    \FOR{$t = 1, 2,\ldots, $number\_of\_epochs}
        \STATE Shuffle $\mathcal{X}_{\rm{tr}}$
        \FOR{$i = 1, 2,\ldots, $number\_of\_mini-batches}
            \STATE Fetch mini-batch $\Bar{\mathcal{X}}_{\rm{tr}}$ from $\mathcal{X}_{\rm{tr}}$
            \STATE Forward $\Bar{\mathcal{X}}_{\rm{tr}}$ and get $f(\Bar{\mathcal{X}}_{\rm{tr}})$
            \STATE Compute $\boldsymbol{g}\left(\Bar{\mathcal{X}}_{\rm{tr}}\right) = \boldsymbol{T}\left(f(\Bar{\mathcal{X}}_{\rm{tr}})\right)$
            \STATE Compute loss by \eqref{surrog} using $\boldsymbol{g}(\Bar{\mathcal{X}}_{\rm{tr}})$
            \STATE Update $\boldsymbol{g}$ by $\mathcal{A}$, which induces an update on $f$
        \ENDFOR
    \ENDFOR
    \end{algorithmic}
    \hspace*{0.02in} {\bf Output: $f$}
    \end{algorithm}

Therefore, we choose to embed the estimation of $\eta(\boldsymbol{x})$ into the estimation of $\Bar{\eta}_j(\boldsymbol{x})$.
More specifically, let $f(\boldsymbol{x})$ be the model output that estimates $\eta(\boldsymbol{x})$, then we make use of the transition function $T_j(\cdot)$ and model $g_j(\boldsymbol{x})=T_j\left(f(\boldsymbol{x})\right)$. 
Based on it, we propose to learn with the following modified loss function:
    \begin{equation}
    {\ell}(\boldsymbol{g}(\boldsymbol{x}), \bar{y})=\ell(\boldsymbol{{T}}(f(\boldsymbol{x})), \bar{y}), \label{l_surr}
    \end{equation}
where $\boldsymbol{T}(f(\boldsymbol{x}))=[T_1(f(\boldsymbol{x})),\ldots,T_m(f(\boldsymbol{x}))]^\top$. 
Then the corresponding risk for the surrogate task can be written as
\begin{align}
R_{\rm{surr}}(f)&=\mathbb{E}_{(\boldsymbol{x},\Bar{y})\sim\Bar{\mathcal{D}}}[\ell(\boldsymbol{T}(f(\boldsymbol{x})), \bar{y})]\notag\\
&=\mathbb{E}_{(\boldsymbol{x},\Bar{y})\sim\Bar{\mathcal{D}}}[{\ell}(\boldsymbol{g}(\boldsymbol{x}), \bar{y})]= R_{\rm{surr}}(\boldsymbol{g}),
\label{R_surr}
\end{align}
and an equivalent expression of the empirical risk \eqref{surrog} is given by
\begin{equation}
        \widehat{R}_{\rm{surr}}(f)
        =\frac{1}{n_{\rm{tr}}}\sum\nolimits_{i=1}^{n_{\rm{tr}}}{\ell}(\boldsymbol{T}(f(\boldsymbol{x}_i)),\bar{y}_i).
        \label{surrof}
\end{equation}
    
In order to prove that this method is classifier-consistent, we introduce the following lemma.
\begin{lemma}
Let $\bar{\boldsymbol{\eta}}(\boldsymbol{x}) = [\Bar{\eta}_1(\boldsymbol{x}),\ldots,\Bar{\eta}_m(\boldsymbol{x})]^\top$ and $\boldsymbol{g}^\star(\boldsymbol{x})=\argmin_{\boldsymbol{g}}R_{\rm{surr}}(\boldsymbol{g};\ell)$ be the optimal classifier of \eqref{surrorisk}.
Provided that a proper loss function, e.g., the cross-entropy loss or mean squared error, is chosen for $\ell$, we have $\boldsymbol{g}^{\star}(\boldsymbol{x})=\bar{\boldsymbol{\eta}}(\boldsymbol{x})$.
\label{lemma_g_star}
\end{lemma}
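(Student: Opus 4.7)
The plan is to reduce the global minimization of $R_{\rm{surr}}$ to a pointwise minimization of the conditional risk, and then invoke the standard property that cross-entropy and mean squared error are strictly proper scoring rules, so that the unique minimizer of the conditional risk is the true class-posterior vector.

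First I would rewrite the surrogate risk by conditioning on $\boldsymbol{x}$ and using the tower property,
\begin{equation*}
R_{\rm{surr}}(\boldsymbol{g};\ell)
=\mathbb{E}_{\boldsymbol{x}}\!\left[\,\sum_{j=1}^{m}\bar{\eta}_j(\boldsymbol{x})\,\ell(\boldsymbol{g}(\boldsymbol{x}),j)\right],
\end{equation*}
where I have used that $p(\bar y=j\mid\boldsymbol{x})=\bar{\eta}_j(\boldsymbol{x})$ by the definition of $\bar{\mathcal{D}}$. Since $\boldsymbol{g}$ is an arbitrary function of $\boldsymbol{x}$, the minimization decouples across $\boldsymbol{x}$, and it suffices to find, for each fixed $\boldsymbol{x}$, the vector $\boldsymbol{g}^{\star}(\boldsymbol{x})\in\Delta^{m-1}$ that minimizes the inner conditional risk $C(\boldsymbol{g};\bar{\boldsymbol{\eta}}(\boldsymbol{x}))=\sum_{j=1}^{m}\bar{\eta}_j(\boldsymbol{x})\,\ell(\boldsymbol{g},j)$.

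Next, I would settle the two specific losses separately. For the cross-entropy loss, $C(\boldsymbol{g};\bar{\boldsymbol{\eta}})=-\sum_{j=1}^m\bar{\eta}_j\log g_j$, and minimizing over the probability simplex by a Lagrangian argument (or equivalently recognizing $C(\boldsymbol{g};\bar{\boldsymbol{\eta}})-C(\bar{\boldsymbol{\eta}};\bar{\boldsymbol{\eta}})$ as the KL divergence $\mathrm{KL}(\bar{\boldsymbol{\eta}}\,\Vert\,\boldsymbol{g})\ge 0$ with equality iff $\boldsymbol{g}=\bar{\boldsymbol{\eta}}$) yields the unique minimizer $\boldsymbol{g}^{\star}=\bar{\boldsymbol{\eta}}$. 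For mean squared error, $C(\boldsymbol{g};\bar{\boldsymbol{\eta}})=\sum_{j=1}^m\bar{\eta}_j\|\boldsymbol{g}-\boldsymbol{e}_j\|^2$ expands to $\|\boldsymbol{g}\|^2-2\boldsymbol{g}^\top\bar{\boldsymbol{\eta}}+1$, which is strictly convex in $\boldsymbol{g}$ and minimized (without need for a simplex constraint, since $\bar{\boldsymbol{\eta}}$ already lies in the simplex) at $\boldsymbol{g}^{\star}=\bar{\boldsymbol{\eta}}$.

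Combining the pointwise optimum with the decoupled form of $R_{\rm{surr}}$ gives $\boldsymbol{g}^{\star}(\boldsymbol{x})=\bar{\boldsymbol{\eta}}(\boldsymbol{x})$ for almost every $\boldsymbol{x}$, which is the stated conclusion. The only real subtlety, and the point I would be most careful about, is the implicit model class assumption: the argument gives the unconstrained Bayes-optimal $\boldsymbol{g}^{\star}$, so strictly speaking the claim presupposes that the hypothesis space is expressive enough to represent $\bar{\boldsymbol{\eta}}$ and that $\boldsymbol{g}$ is constrained to the simplex (as is automatic when a softmax output layer is used for cross-entropy). I would note this explicitly, and mention that the uniqueness part of both cases follows from the \emph{strict} propriety of the two scoring rules, so no additional regularity beyond that is needed.
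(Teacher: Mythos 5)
Your proposal is correct and follows essentially the same route as the paper: reduce the minimization of $R_{\rm{surr}}$ to pointwise minimization of the conditional risk over the simplex, then identify the minimizer as $\bar{\boldsymbol{\eta}}(\boldsymbol{x})$ for each of the two losses. The paper carries out both cases with an explicit Lagrange-multiplier computation, whereas your KL-divergence identity for cross-entropy and your direct quadratic expansion for the squared error are equivalent but slightly cleaner, since they deliver strict uniqueness of the minimizer without a first-order-condition argument; your closing remark about the expressiveness of the hypothesis class is exactly the assumption the paper defers to Theorem~\ref{theorem2} ($\boldsymbol{g}^{\star}\in\mathcal{G}$).
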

Since $\boldsymbol{g}(\boldsymbol{x})=\boldsymbol{T}(f(\boldsymbol{x}))$ and $\boldsymbol{T}(\cdot)$ is deterministic, when considering minimizing $R_{\rm{surr}}(f)$ that takes $f$ as the argument, we can prove the following classifier-consistency.
\begin{theorem} [Identification of the optimal binary classifier]
Assume that the cross-entropy loss or mean squared error is used for $\ell$ and $\ell_\mathrm{b}$,
and the model $\mathcal{G}$ used for learning $\boldsymbol{g}$ is very flexible, e.g., deep neural networks,
so that $\boldsymbol{g}^{\star}\in\mathcal{G}$. 
Let $f_{\rm{surr}}^{\star}$ be the U$^m$-SSC optimal classifier induced by $\boldsymbol{g}^{\star}$, 
and $f^{\star}=\argmin_{f}R(f;\ell_\mathrm{b})$ be the optimal classifier of \eqref{Rpn}, 
we have $f^{\star}_{\rm{surr}}=f^{\star}$.
\label{theorem2}
\end{theorem}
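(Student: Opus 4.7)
The plan is to chain together the three preceding results: Theorem~\ref{theorem1} (the linear-fractional bridge $\bar{\boldsymbol{\eta}} = \boldsymbol{T}(\eta)$), Lemma~\ref{lemma_injective} (injectivity of $\boldsymbol{T}$ on $[0,1]$), and Lemma~\ref{lemma_g_star} (a proper loss recovers the true class-posterior). The first step is to invoke Lemma~\ref{lemma_g_star} in both the surrogate and the original problems. Applied to the $m$-class surrogate task, it gives $\boldsymbol{g}^{\star}(\boldsymbol{x}) = \bar{\boldsymbol{\eta}}(\boldsymbol{x})$. Applied to the original binary task with the same family of proper losses (cross-entropy or MSE), it gives $f^{\star}(\boldsymbol{x}) = \eta(\boldsymbol{x})$; I would note that the binary case is just the $m=2$ instantiation of the same calibration result, so no extra work is needed beyond citing the lemma.

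Next, I would connect these two optima through the parametrization $\boldsymbol{g}(\boldsymbol{x}) = \boldsymbol{T}(f(\boldsymbol{x}))$. By definition, $f_{\rm{surr}}^{\star}$ is the binary classifier induced by $\boldsymbol{g}^{\star}$, i.e., it satisfies $\boldsymbol{T}(f_{\rm{surr}}^{\star}(\boldsymbol{x})) = \boldsymbol{g}^{\star}(\boldsymbol{x})$. Combining with Theorem~\ref{theorem1}, which states $\bar{\boldsymbol{\eta}}(\boldsymbol{x}) = \boldsymbol{T}(\eta(\boldsymbol{x}))$, we obtain the pointwise identity
\begin{equation*}
\boldsymbol{T}(f_{\rm{surr}}^{\star}(\boldsymbol{x})) \;=\; \bar{\boldsymbol{\eta}}(\boldsymbol{x}) \;=\; \boldsymbol{T}(\eta(\boldsymbol{x})).
\end{equation*}
Since $\eta(\boldsymbol{x}), f_{\rm{surr}}^{\star}(\boldsymbol{x}) \in [0,1]$ (the latter because $\boldsymbol{g}^{\star}$ lies in the image of $\boldsymbol{T}$ only when its pre-image is a valid posterior), Lemma~\ref{lemma_injective} lets us cancel $\boldsymbol{T}$ to conclude $f_{\rm{surr}}^{\star}(\boldsymbol{x}) = \eta(\boldsymbol{x}) = f^{\star}(\boldsymbol{x})$, which is the claim.

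The main obstacle I anticipate is justifying the realizability of $\boldsymbol{g}^{\star}$ by the composition $\boldsymbol{T} \circ f$: even though Lemma~\ref{lemma_g_star} asserts $\boldsymbol{g}^{\star} = \bar{\boldsymbol{\eta}}$ exists in a sufficiently rich class $\mathcal{G}$, the optimization in Algorithm~\ref{alg} is over $f$, and the effective hypothesis class is $\{\boldsymbol{T}(f) : f \in \mathcal{F}\}$. I would address this by explicitly appealing to the flexibility assumption: because $\bar{\boldsymbol{\eta}} = \boldsymbol{T}(\eta)$ (Theorem~\ref{theorem1}), a sufficiently flexible $\mathcal{F}$ (e.g., deep networks with a sigmoid output) that can represent $\eta$ automatically yields $\boldsymbol{T} \circ f = \bar{\boldsymbol{\eta}} = \boldsymbol{g}^{\star}$, so the constrained minimizer coincides with the unconstrained one. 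A secondary subtlety is the uniqueness needed to write $f_{\rm{surr}}^{\star} = f^{\star}$ (rather than equality up to strictly monotone reparametrization); here the proper-loss calibration in Lemma~\ref{lemma_g_star} does give pointwise uniqueness of the posterior, and injectivity of $\boldsymbol{T}$ transports this uniqueness back to $f$, closing the argument.
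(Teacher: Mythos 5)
Your proposal is correct and follows essentially the same route as the paper's proof: both arguments use Lemma~\ref{lemma_g_star} to identify $\boldsymbol{g}^{\star}=\bar{\boldsymbol{\eta}}$ and $f^{\star}=\eta$, then chain Theorem~\ref{thm:transformation} with the injectivity of $\boldsymbol{T}$ on $[0,1]$ (Lemma~\ref{lemma_injective}) to conclude $f^{\star}_{\rm{surr}}=\eta=f^{\star}$ pointwise. Your explicit treatment of realizability of $\bar{\boldsymbol{\eta}}$ within the composed class $\{\boldsymbol{T}(f):f\in\mathcal{F}\}$ and of uniqueness is a slightly more careful rendering of what the paper handles by directly assuming $\boldsymbol{g}^{\star}\in\mathcal{G}$, but it is not a different argument.
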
  

\begin{figure}[t]
\centerline{\includegraphics[scale=0.48]{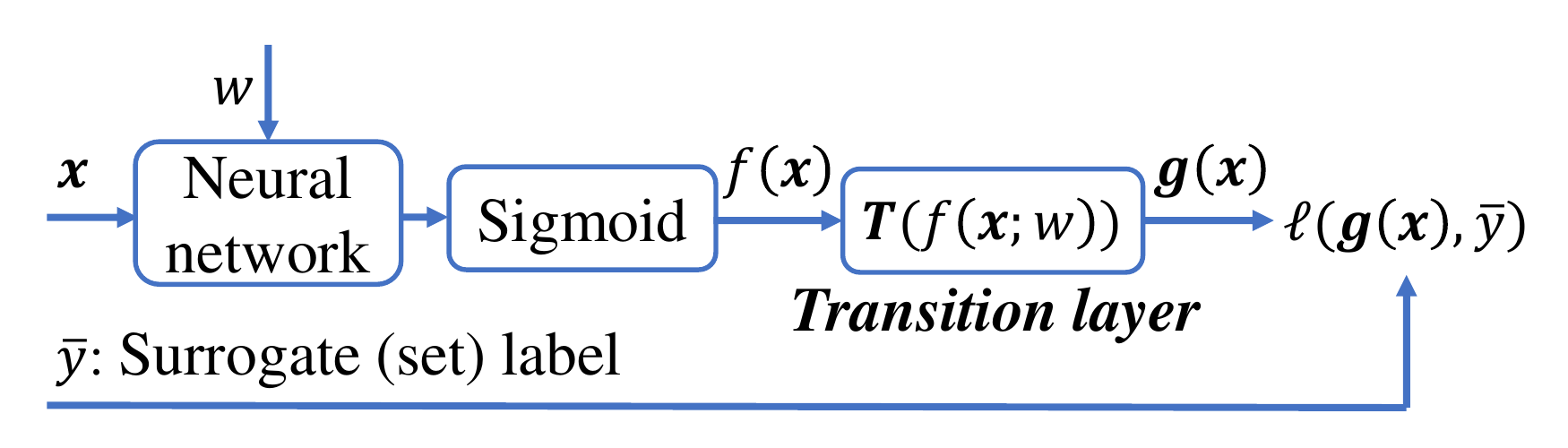}}
\caption{Implementation diagram of U$^m$-SSC.}
\label{implementation}
\end{figure}
    
So far, we have proved that the optimal classifier for the original binary classification task can be identified by the U$^m$-SSC learning scheme.
Its algorithm is described in Algorithm~\ref{alg} and its implementation is illustrated in Figure~\ref{implementation}.

We implement $\boldsymbol{T}(\cdot)$ by adding a transition layer following the sigmoid function of the neural network~(NN).
At the training phase, a sample $(\boldsymbol{x}_{\rm{tr}},\bar{y}_{\rm{tr}})$ is fetched to the network. 
A sigmoid function $f_{\rm{sig}}(x)=\frac{1}{1+e^{-x}}$ is used to map the output of NN to the range $[0, 1]$ such that the output $f(\boldsymbol{x})$ is an estimate of $\eta(\boldsymbol{x})$. 
Then $f(\boldsymbol{x})$ is forwarded to the transition layer and a vector output $\boldsymbol{g}(\boldsymbol{x})=\boldsymbol{T}(f(\boldsymbol{x}))$ is obtained.
The loss computed on the output $\boldsymbol{g}(\boldsymbol{x})$ and the surrogate label $\bar{y}_{\rm{tr}}$ by \eqref{surrog} is then used for updating the NN weights $w$. 
Note that the transition layer is fixed and only the weights in the base network are learnable.
At the test phase, for any test sample $\boldsymbol{x}_{\rm{te}}$, we compute $f(\boldsymbol{x}_{\rm{te}})$ using only the trained base network and sigmoid function.
The test sample is classified by using the sign function, i.e., ${\rm{sign}}(f(\boldsymbol{x}_{\rm{te}}) - \frac{1}{2})$.
Our proposed method is model-agnostic and can be easily trained with a stochastic optimization algorithm, which ensures its scalability to large-scale datasets.

    \begin{table*}[t]\centering
    \caption{Specification of datasets and corresponding models.}
        \vspace{1ex}%
        \newcommand{\tabincell}[2]{\begin{tabular}{@{}#1@{}}#2\end{tabular}}
        \begin{tabular}{ l|r r r r|l } 
        \hline
        Dataset & \tabincell{c}{\# Train} & \tabincell{c}{\# Test} & \tabincell{c}{\# Features} & \tabincell{c}{$\pi_{\mathcal{D}}$} & \tabincell{c}{Model} \\ 
        \hline
        MNIST \cite{lecun98mnist} & 60,000 & 10,000 & 784 & 0.49 & 5-layer MLP \\ 

        Fashion-MNIST \cite{xiao17arxiv} & 60,000 & 10,000 & 784 & 0.8 & 5-layer MLP \\ 
        Kuzushiji-MNIST \cite{clanuwat2018deep} & 60,000 & 10,000 & 784 & 0.3 & 5-layer MLP \\ 
        CIFAR-10 \cite{krizhevsky09cifar} & 50,000 & 10,000 & 3,072 & 0.7 & ResNet-32 \\ 
        \hline
        \end{tabular}
        \label{dataset}
    \end{table*} 

\subsection{Theoretical Analysis}
In what follows, we upper-bound the estimation error of our proposed method.
Let $\hat{f}_{\rm{surr}}=\argmin_{f\in\mathcal{F}}\widehat{R}_{\rm{surr}}(f)$ be our empirical classifier, where $\mathcal{F}=\{f:\mathcal{X} \rightarrow\mathbb{R}\}$ is a class of measurable functions, and $f^{\star}_{\rm{surr}}=\argmin_{f\in\mathcal{F}}R_{\rm{surr}}(f)$ be the optimal classifier, the estimation error is defined as the gap between the risk of $\hat{f}_{\rm{surr}}$ and that of $f^{\star}_{\rm{surr}}$, i.e., $R_{\rm{surr}}(\hat{f}_{\rm{surr}})-R_{\rm{surr}}(f^{\star}_{\rm{surr}})$.
To derive the estimation error bound, we firstly investigate the Lipschitz continuity of the transition function $\boldsymbol{T}(f(\boldsymbol{x}))$.
\begin{lemma}
Assume that among the $m$ sets of U data, at least two of them are different, i.e., $\exists j, j' \in \{1,\ldots,m\}$ such that $j \neq j' $ and $\pi_j \neq \pi_{j'}$,
and $0\leq f(\boldsymbol{x})\leq1,\forall x \in \mathcal{X}$, e.g., $f(\boldsymbol{x})$ is mapped to $[0, 1]$ by the sigmoid function. 
Then, $\forall j = 1,\ldots,m$, the function $T_j(f(\boldsymbol{x}))$ is Lipschitz continuous w.r.t. $f(\boldsymbol{x})$ with a Lipschitz constant $2/\alpha^2$, where
\begin{equation} \nonumber
\alpha = \min\left(\sum_{j=1}^m{\rho}_j\pi_j(1-\pi_{\mathcal{D}}),\sum_{j=1}^m{\rho}_j\pi_{\mathcal{D}}(1-\pi_j)\right).
\end{equation}
\label{lemma4}
\end{lemma}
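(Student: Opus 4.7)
The plan is to prove Lipschitz continuity by bounding the derivative of $T_j$ uniformly on $[0,1]$ and then invoking the mean value theorem. Since $T_j(\eta)=(a_j\eta+b_j)/(c\eta+d)$ is a linear-fractional function, the quotient rule gives a clean expression
\[
T_j'(\eta)=\frac{a_j d-b_j c}{(c\eta+d)^2},
\]
so the whole task reduces to bounding the numerator from above in absolute value and the denominator from below.

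For the denominator, I would exploit the fact that $c\eta+d$ is affine in $\eta$, so its minimum on $[0,1]$ is attained at an endpoint. Plugging in and simplifying, $c\cdot 0 + d = \sum_j\rho_j\pi_{\mathcal{D}}(1-\pi_j)$, and $c\cdot 1 + d = \sum_j\rho_j\pi_j(1-\pi_{\mathcal{D}})$ after the $\pi_{\mathcal{D}}$ terms cancel. Thus $\min_{\eta\in[0,1]}(c\eta+d)=\alpha$, and in particular $(c\eta+d)^2\ge\alpha^2$. This also shows that $\alpha>0$ under the given assumption (at least two $\pi_j$ differ, together with $\pi_{\mathcal{D}}\in(0,1)$ implicit in the setup), which is precisely where the hypothesis is used to guarantee that $T_j$ is well-defined.

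For the numerator, I would use elementary bounds: since $\rho_j\in[0,1]$, $\pi_j,\pi_{\mathcal{D}}\in[0,1]$, and $|\pi_j-\pi_{\mathcal{D}}|\le 1$, we get $|a_j|\le\rho_j$ and $|b_j|\le\rho_j$, while $|c|\le 1$ and $|d|\le 1$, yielding $|a_j d|\le\rho_j\le 1$ and $|b_j c|\le\rho_j\le 1$, hence $|a_j d-b_j c|\le 2$ by the triangle inequality. Combining the two bounds gives $|T_j'(\eta)|\le 2/\alpha^2$ for every $\eta\in[0,1]$, and then the mean value theorem yields
\[
|T_j(f(\boldsymbol{x}))-T_j(f(\boldsymbol{x}'))|\le \frac{2}{\alpha^2}\,|f(\boldsymbol{x})-f(\boldsymbol{x}')|,
\]
since the hypothesis $0\le f\le 1$ keeps both arguments inside the interval where the derivative bound holds.

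The only subtle step is verifying that the denominator really is bounded below by $\alpha$ rather than by some smaller quantity; once one notices that $c\eta+d$ is affine so its range on $[0,1]$ is exactly $[\min(d,c+d),\max(d,c+d)]$, and computes both endpoints explicitly, the identification with $\alpha$ drops out. Everything else (the quotient-rule derivative and the crude triangle-inequality bound on the numerator) is routine, and the factor of $2$ in the stated Lipschitz constant is deliberately loose so that no tighter accounting of $\rho_j$ or $\pi_{\mathcal{D}}(1-\pi_{\mathcal{D}})$ is required.
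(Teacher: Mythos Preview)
Your proposal is correct and essentially identical to the paper's proof: both compute the derivative $T_j'(\eta)=(a_jd-b_jc)/(c\eta+d)^2$, bound the numerator by $2$ via the elementary bounds $|a_j|,|b_j|,|c|,|d|\le 1$, and bound the denominator below by $\alpha^2$ using that $c\eta+d$ attains its minimum on $[0,1]$ at an endpoint, with the two endpoint values being exactly the two arguments of the $\min$ defining $\alpha$. Your affine observation for the denominator is marginally cleaner than the paper's case split on the sign of $c$, but the substance is the same.
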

\vspace{-2ex}%
Then we analyze the estimation error as follows.
\begin{theorem} [Estimation error bound]
Assume that the loss $\ell(\boldsymbol{T}(f), \bar{y})$ is upper-bounded by $M_{\ell}$ and is $\mathcal{L_{\ell}}$-Lipschitz continuous w.r.t. $\boldsymbol{T}(f)$.
Let $\mathfrak{R}_{n_{\rm{tr}}}(\mathcal{F})$ be the \emph{Rademacher complexity} of $\mathcal{F}$ \cite{mohri12FML,sshwartz14UML}.
Then, for any $\delta>0$, we have with probability at least $1 - \delta$,
\begin{align}
&R_{\rm{surr}}(\hat{f}_{\rm{surr}}) - R_{\rm{surr}}(f^{\star}_{\rm{surr}}) \leq\notag\\ &\quad \frac{8\sqrt{2}m\mathcal{L}_{\ell}}{\alpha^2}\mathfrak{R}_{n_{\rm{tr}}}(\mathcal{F})+2M_{\ell}\sqrt{\frac{\ln(2/\delta)}{2n_{\rm{tr}}}}.
\end{align}
\label{error_bound}
\end{theorem}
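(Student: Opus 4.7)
The plan is to derive the bound through the classical Rademacher-complexity analysis of excess risk, chaining four standard reductions whose accumulated constants multiply to $8\sqrt{2}\,m\mathcal{L}_\ell/\alpha^2$.

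First, since $\hat{f}_{\rm{surr}}$ minimizes the empirical risk on $\mathcal{F}$ while $f^{\star}_{\rm{surr}}$ minimizes the expected risk, a routine two-line argument gives
\[
R_{\rm{surr}}(\hat{f}_{\rm{surr}}) - R_{\rm{surr}}(f^{\star}_{\rm{surr}}) \le 2 \sup_{f \in \mathcal{F}} \big| \widehat{R}_{\rm{surr}}(f) - R_{\rm{surr}}(f) \big|,
\]
so it suffices to control the uniform deviation on the right. Because the composed loss $\ell(\boldsymbol{T}(f(\cdot)),\cdot)$ is bounded by $M_\ell$, swapping a single training sample perturbs the supremum by at most $M_\ell/n_{\rm{tr}}$, so McDiarmid's inequality (applied with $\delta/2$ to both tails of the absolute value) yields, with probability at least $1-\delta$,
\[
\sup_{f \in \mathcal{F}} \big|\widehat{R}_{\rm{surr}}(f) - R_{\rm{surr}}(f)\big| \le \mathbb{E}\!\left[\sup_{f \in \mathcal{F}}\big|\widehat{R}_{\rm{surr}}(f) - R_{\rm{surr}}(f)\big|\right] + M_\ell \sqrt{\frac{\ln(2/\delta)}{2n_{\rm{tr}}}}.
\]

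Next, the standard symmetrization argument upper-bounds the expected supremum by $2\mathfrak{R}_{n_{\rm{tr}}}(\ell \circ \boldsymbol{T} \circ \mathcal{F})$, where the composed function class is $\{(\boldsymbol{x},\bar{y}) \mapsto \ell(\boldsymbol{T}(f(\boldsymbol{x})),\bar{y}) : f \in \mathcal{F}\}$. I then peel the loss and the transition off this complexity in two steps. Because $\ell$ is $\mathcal{L}_\ell$-Lipschitz in the vector argument $\boldsymbol{T}(f)$, Maurer's vector contraction inequality gives
\[
\mathfrak{R}_{n_{\rm{tr}}}(\ell \circ \boldsymbol{T} \circ \mathcal{F}) \le \sqrt{2}\,\mathcal{L}_\ell \sum_{j=1}^{m} \mathfrak{R}_{n_{\rm{tr}}}(T_j \circ \mathcal{F}).
\]
By Lemma~\ref{lemma4}, each scalar $T_j$ is $(2/\alpha^2)$-Lipschitz on the admissible range $[0,1]$ of $f$, so the ordinary scalar Talagrand--Ledoux contraction lemma yields $\mathfrak{R}_{n_{\rm{tr}}}(T_j \circ \mathcal{F}) \le (2/\alpha^2)\,\mathfrak{R}_{n_{\rm{tr}}}(\mathcal{F})$ for every $j$. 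Multiplying the four accumulated constants, $2 \cdot 2 \cdot \sqrt{2}\mathcal{L}_\ell \cdot m \cdot (2/\alpha^2) = 8\sqrt{2}\,m\mathcal{L}_\ell/\alpha^2$, recovers exactly the announced coefficient and completes the bound.

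I expect the main obstacle to be the vector-contraction step. The ordinary Talagrand--Ledoux contraction is stated only for scalar-valued Lipschitz maps, whereas our composed loss $\ell \circ \boldsymbol{T}$ passes through the $m$-dimensional transition $\boldsymbol{T}$, so one really needs Maurer's extension (or, equivalently, a careful coordinate-wise decomposition that treats each component of $\boldsymbol{T}(f)$ separately and re-aggregates via a Cauchy--Schwarz-type step). That extension is precisely what introduces the $\sqrt{2}\,m$ prefactor; every other ingredient—McDiarmid, symmetrization, and the scalar contraction for each $T_j$ with the Lipschitz constant $2/\alpha^2$ already certified by Lemma~\ref{lemma4}—is essentially textbook.
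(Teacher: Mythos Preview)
Your proposal is correct and mirrors the paper's own proof essentially step for step: the paper also reduces the excess risk to twice the uniform deviation, applies McDiarmid with the bounded-difference $M_\ell/n_{\rm tr}$ and symmetrization to reach $2\mathfrak{R}_{n_{\rm tr}}(\ell\circ\mathcal{G})$, then invokes Maurer's vector contraction to pick up $\sqrt{2}\,\mathcal{L}_\ell$ and decouple the $m$ coordinates, and finally applies the scalar Talagrand contraction with the $2/\alpha^2$ Lipschitz constant from Lemma~\ref{lemma4}. Your identification of the vector-contraction step as the only nonroutine ingredient is exactly right.
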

\vspace{-2ex}%
Theorem \ref{error_bound} demonstrates that as the number of training samples goes to infinity, the risk of $\hat{f}_{\rm{surr}}$ converges to the risk of $f^{\star}_{\rm{surr}}$, since $\mathfrak{R}_{n_{\rm{tr}}}(\mathcal{F})\to0$ for all parametric models with a bounded norm.
Moreover, the coefficient $\alpha$ implies that a tighter error bound could be obtained when the class priors $\pi_j$ are close to 0 or 1. 
This conclusion agrees with our intuition that purer U sets (containing almost only positive/ negative examples) lead to better performance. 


\begin{figure*}[htbp]
    \begin{minipage}[c]{0.1\textwidth}~\end{minipage}\hspace{1.6em}%
    \begin{minipage}[c]{0.25\textwidth}\centering\normalsize Test error, 10 U sets \end{minipage}\hspace{2.65em}%
    \begin{minipage}[c]{0.25\textwidth}\centering\normalsize Test error, 50 U sets \end{minipage}\hspace{2.45em}%
    \begin{minipage}[c]{0.25\textwidth}\centering\normalsize Training risk, 50 U sets \end{minipage}\\
    \vspace{0.5em}
    \begin{minipage}[c]{0.08\textwidth}\flushright\small MNIST \end{minipage}%
    \hspace{0.5em}
    \begin{minipage}[c]{0.90\textwidth}
        \includegraphics[width=0.33\textwidth]{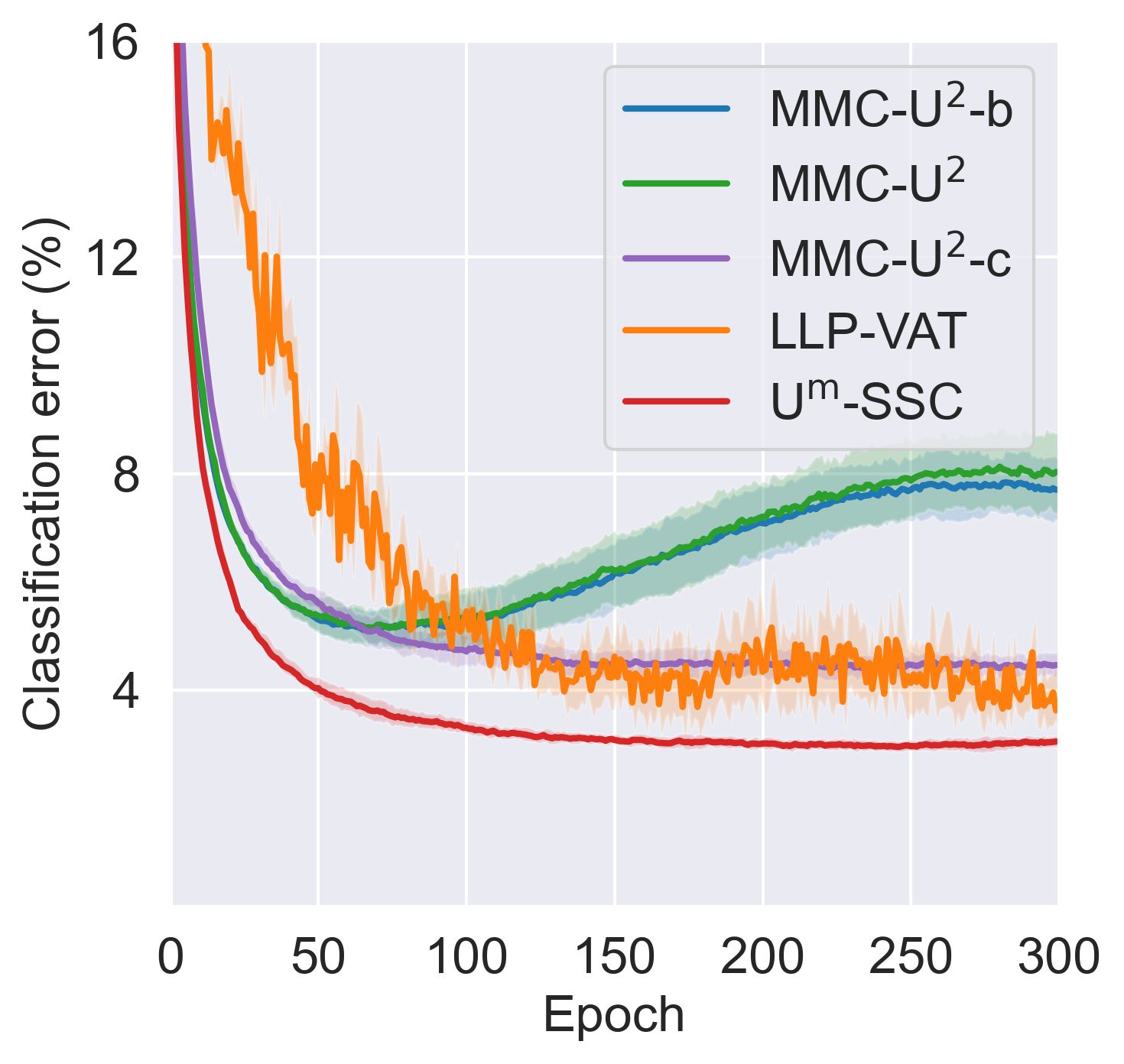}%
        \includegraphics[width=0.33\textwidth]{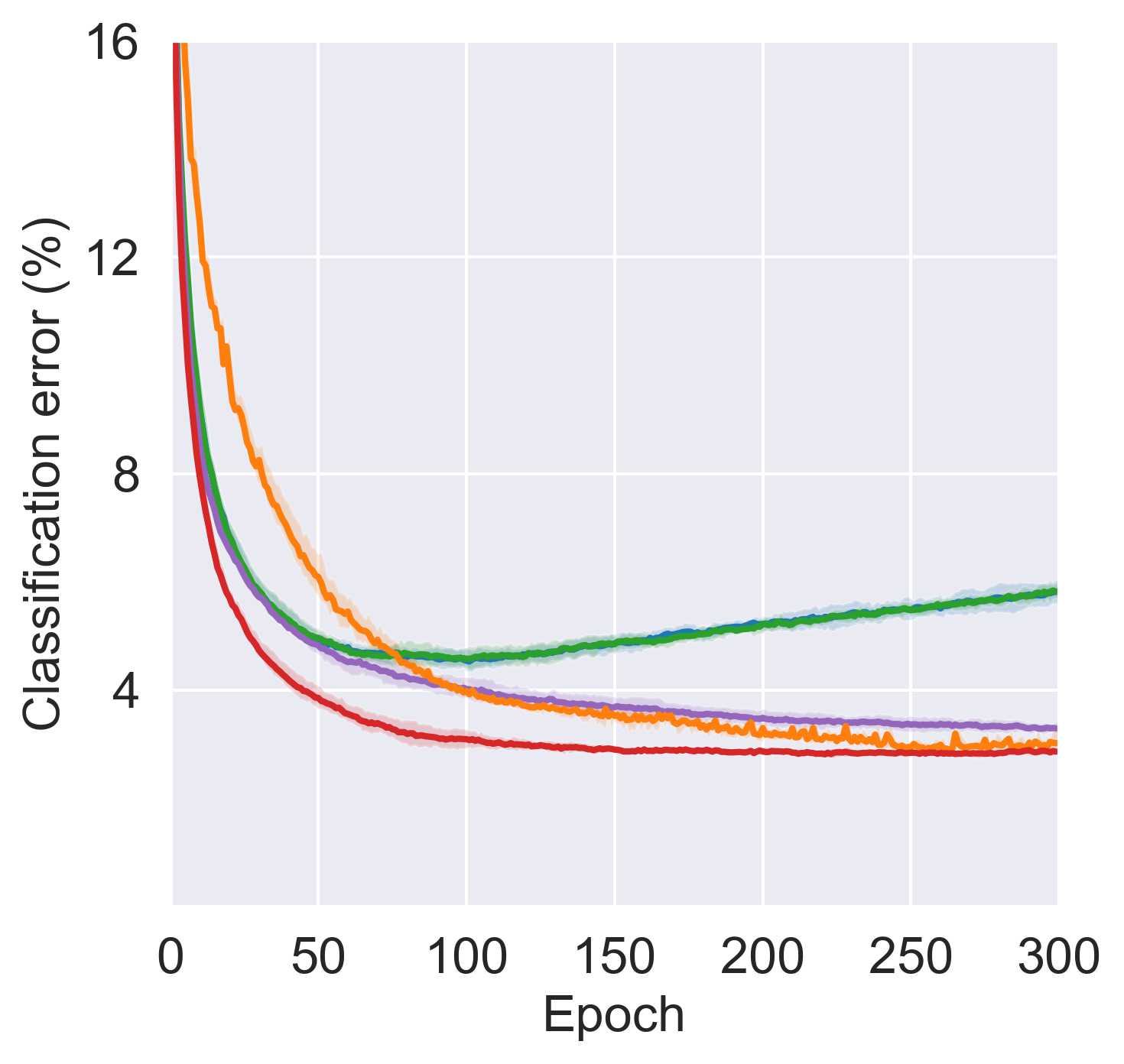}
        \includegraphics[width=0.33\textwidth]{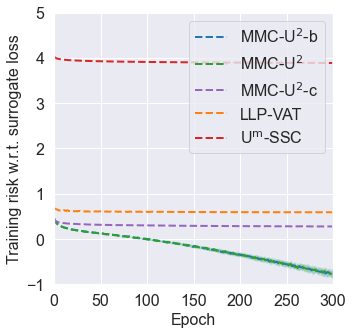}
    \end{minipage}\\
    \begin{minipage}[c]{0.08\textwidth}\flushright\small Fashion\\-MNIST \end{minipage}%
    \hspace{0.5em}
    \begin{minipage}[c]{0.90\textwidth}
        \includegraphics[width=0.33\textwidth]{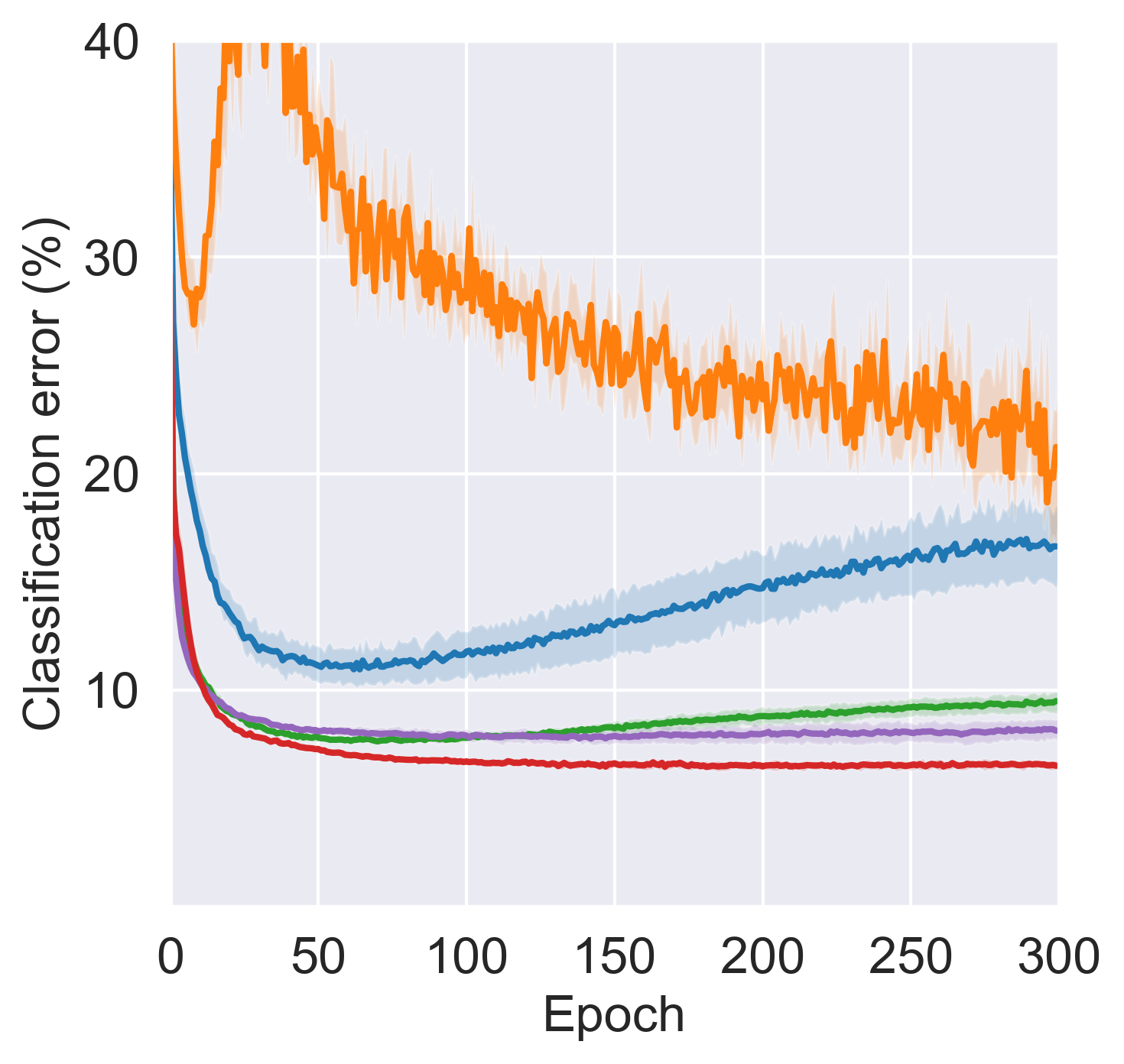}%
        \includegraphics[width=0.33\textwidth]{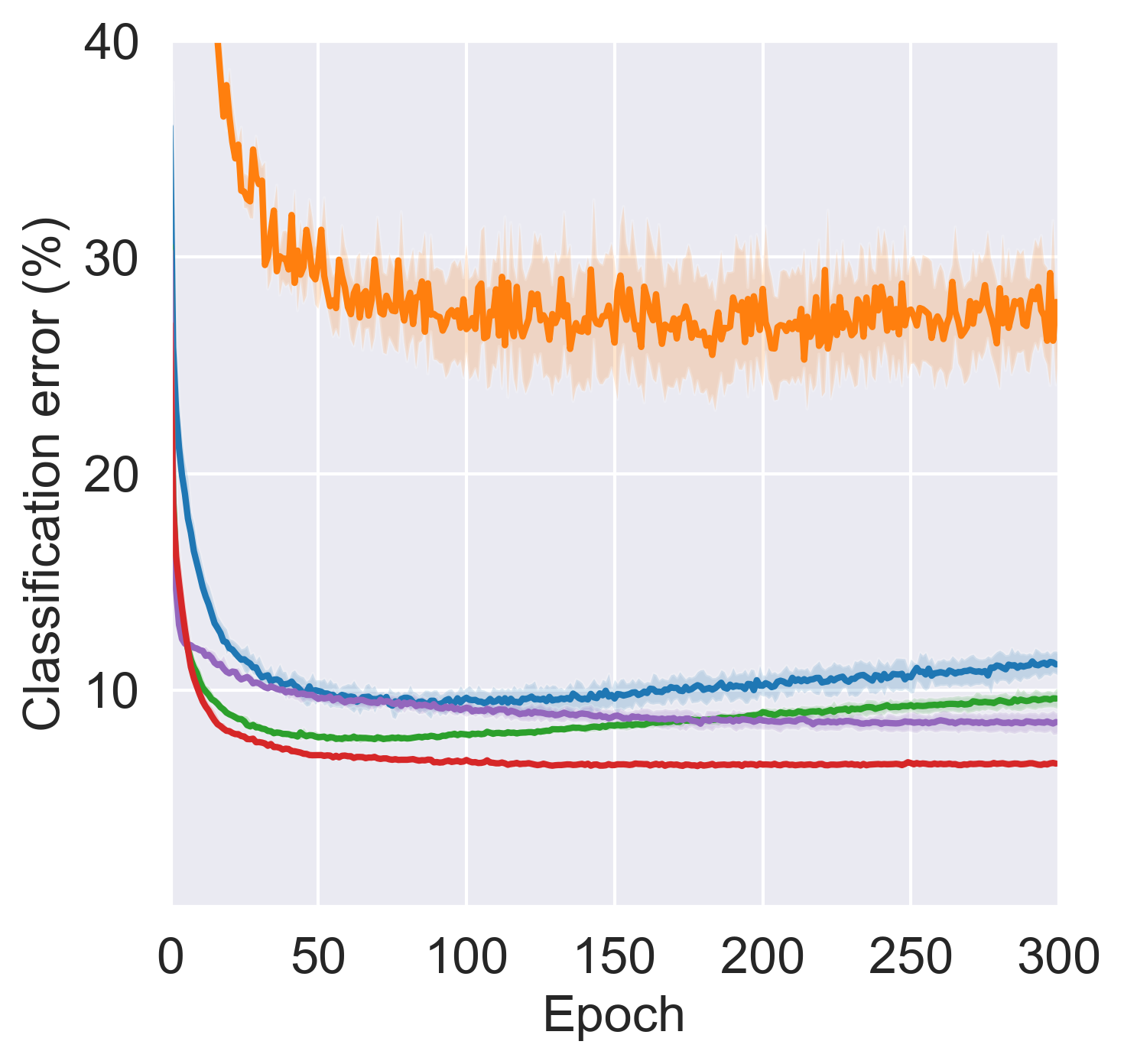}
        \includegraphics[width=0.33\textwidth]{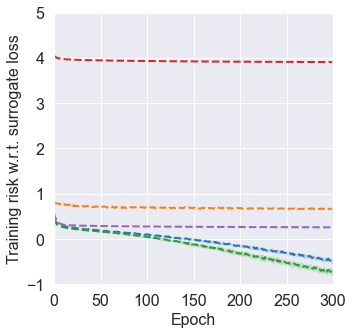}
    \end{minipage}\\
    \begin{minipage}[c]{0.08\textwidth}\flushright\small Kuzushiji\\-MNIST \end{minipage}%
    \hspace{0.5em}
    \begin{minipage}[c]{0.90\textwidth}
        \includegraphics[width=0.33\textwidth]{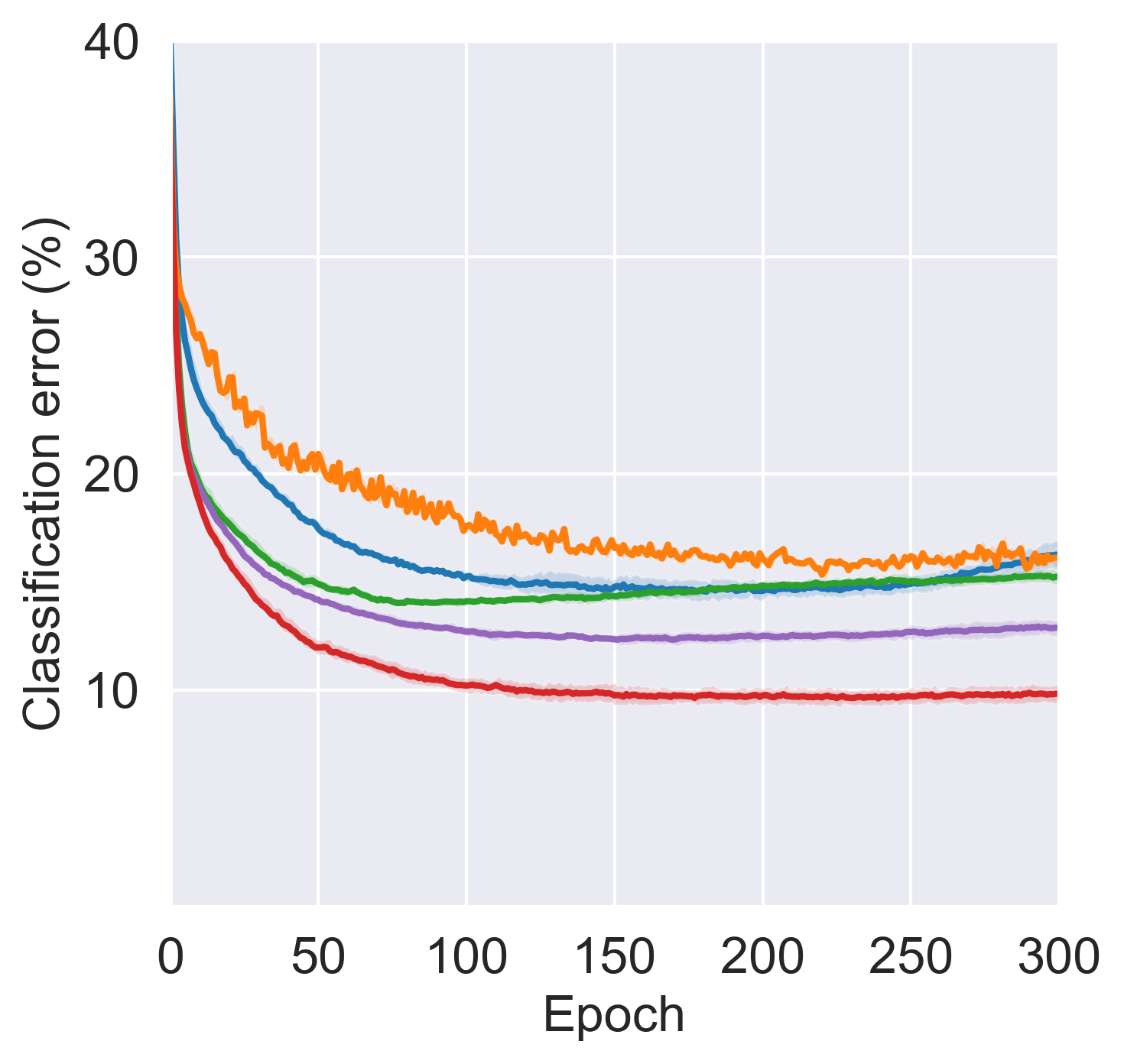}%
        \includegraphics[width=0.33\textwidth]{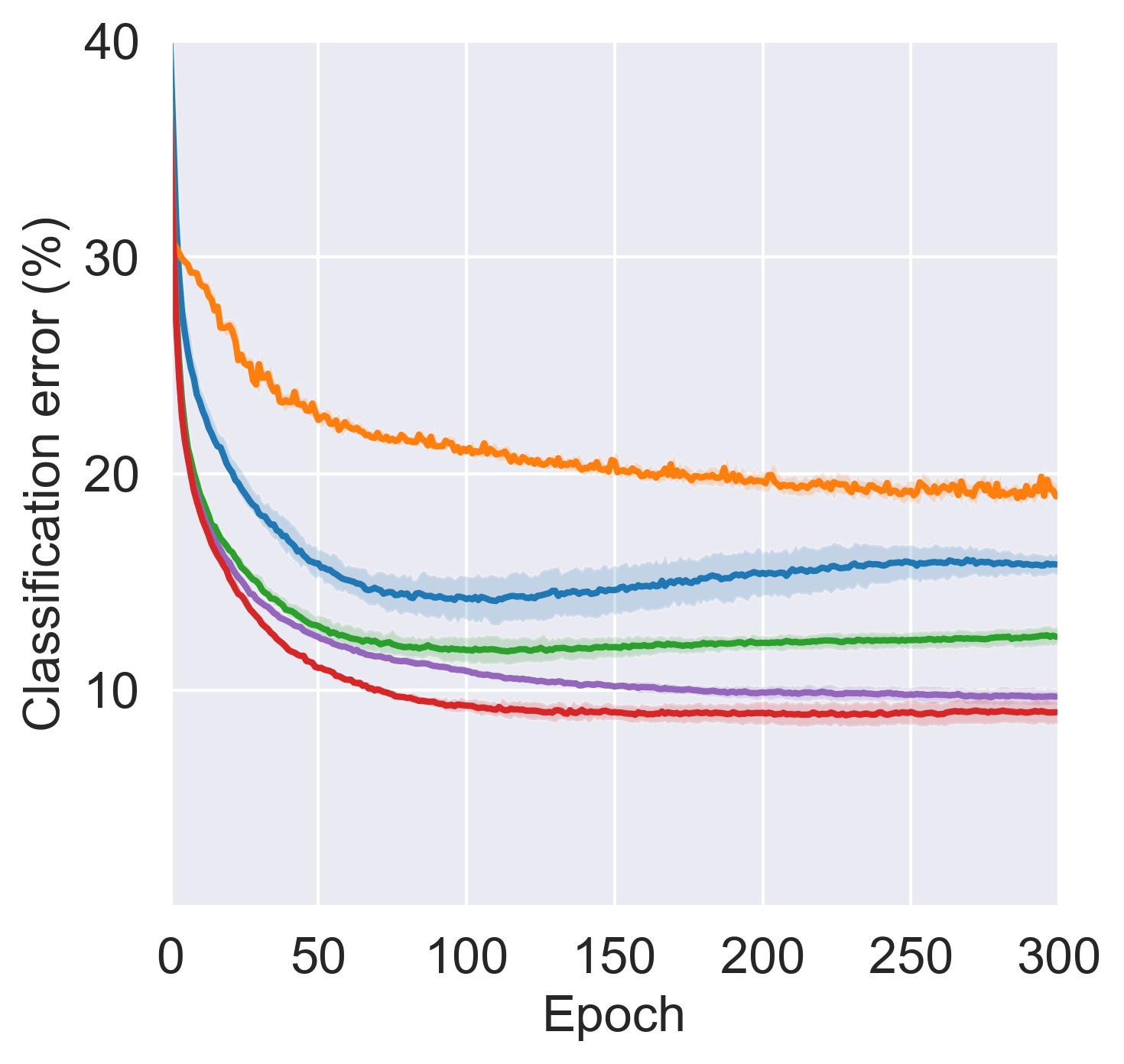}
        \includegraphics[width=0.33\textwidth]{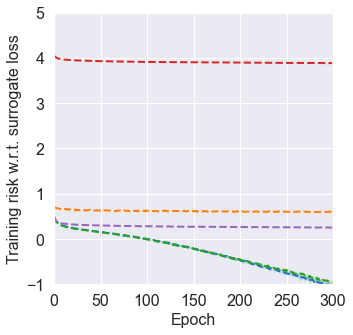}
    \end{minipage}\\
    \begin{minipage}[c]{0.08\textwidth}\flushright\small CIFAR-10 \end{minipage}%
    \hspace{0.5em}
    \begin{minipage}[c]{0.90\textwidth}
        \includegraphics[width=0.33\textwidth]{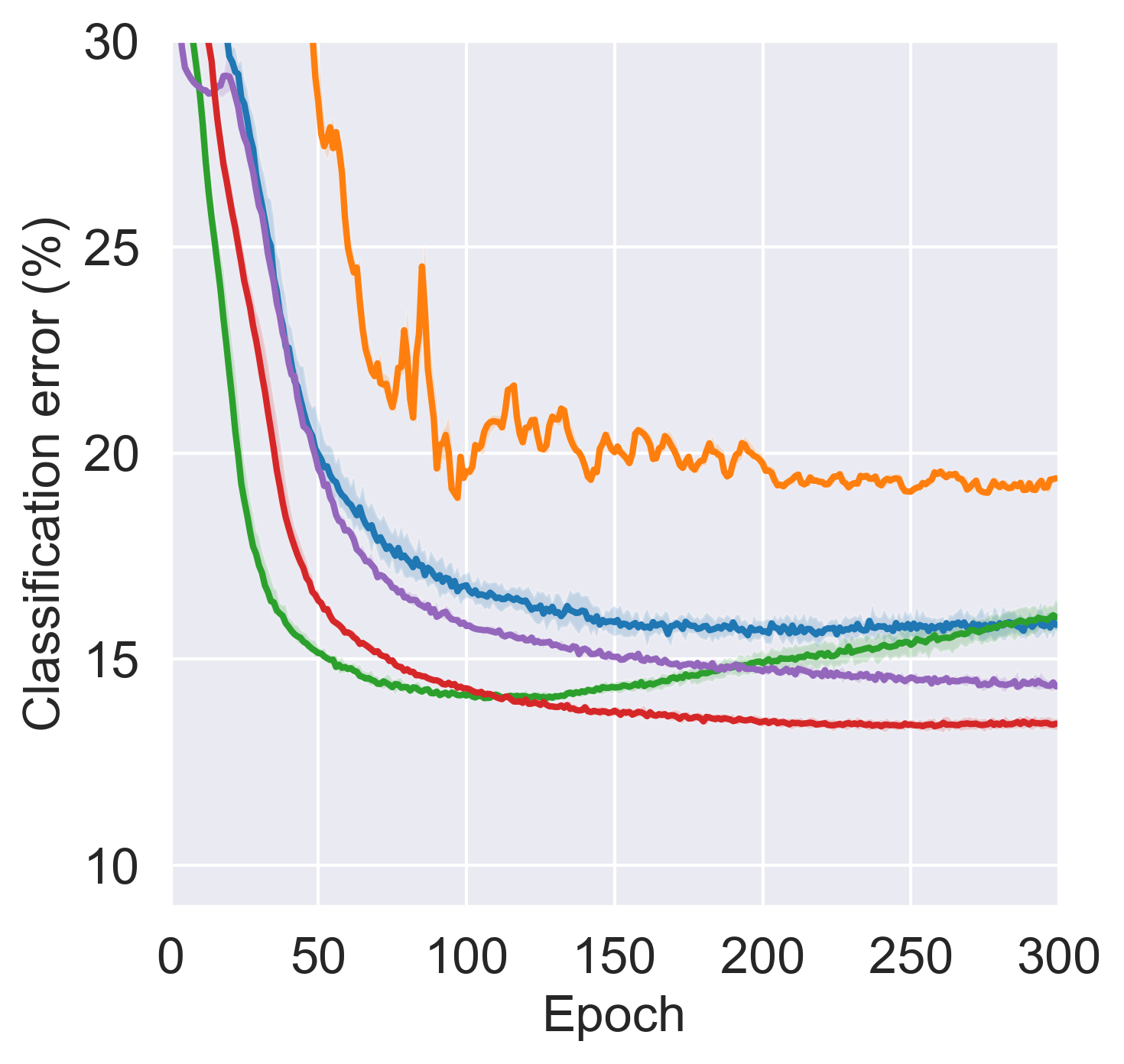}%
        \includegraphics[width=0.33\textwidth]{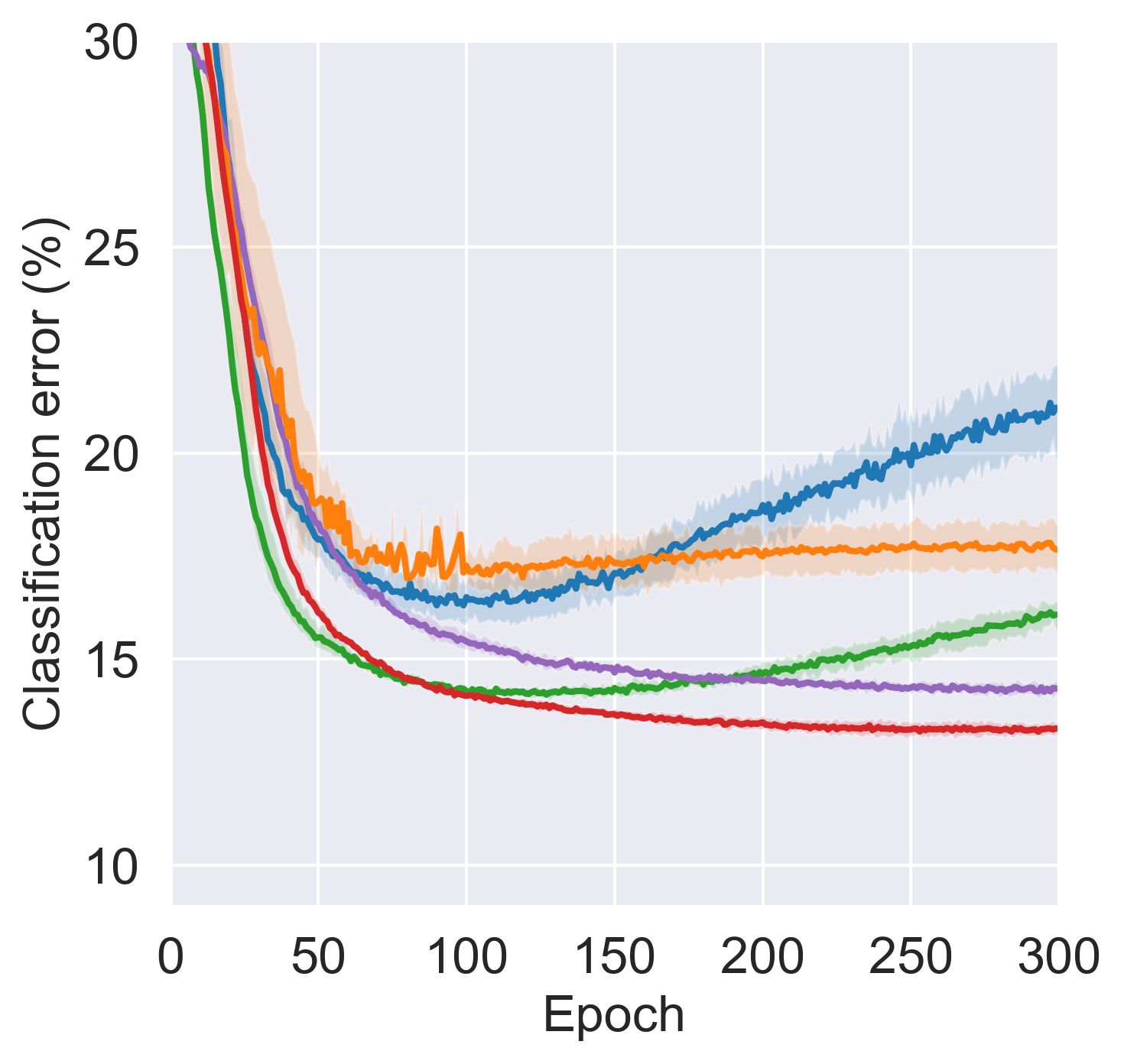}
        \includegraphics[width=0.33\textwidth]{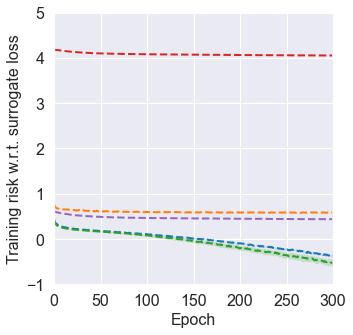}
    \end{minipage}
    
    \caption{Experimental results of learning from 10 and 50 sets of U data.
    Solid curves are the test errors (in percentage) and dashed curves are the empirical training risks.
    Dark colors show the mean errors (risks) of 3 trials and light colors show the standard deviations.}
    \label{fig:performance}
\end{figure*}

\begin{figure*}[t]
    \centering
    \begin{minipage}[c]{\textwidth}
        \includegraphics[width=0.25\textwidth]{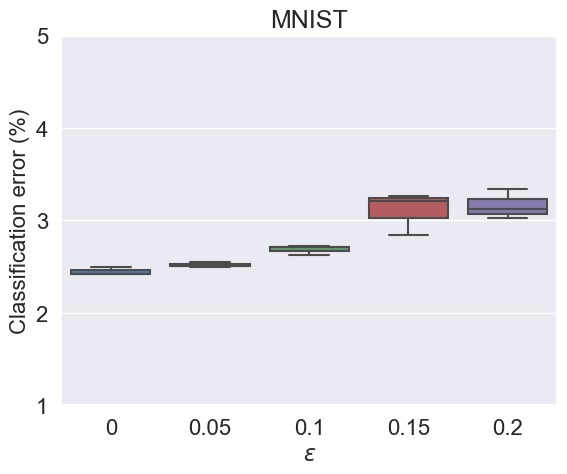}%
        \includegraphics[width=0.25\textwidth]{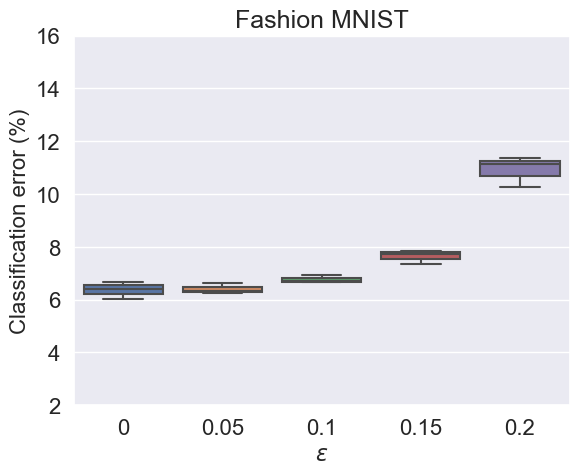}%
        \includegraphics[width=0.25\textwidth]{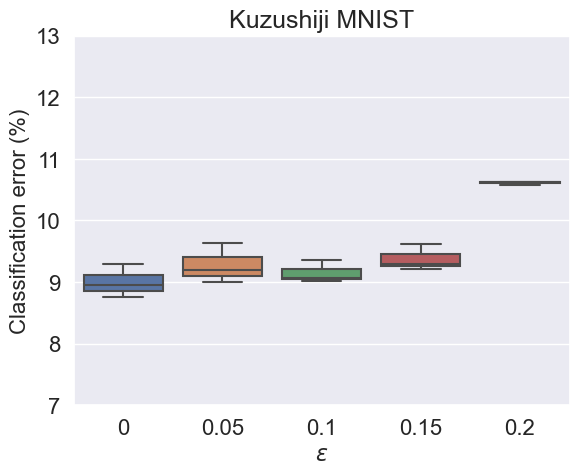}%
        \includegraphics[width=0.25\textwidth]{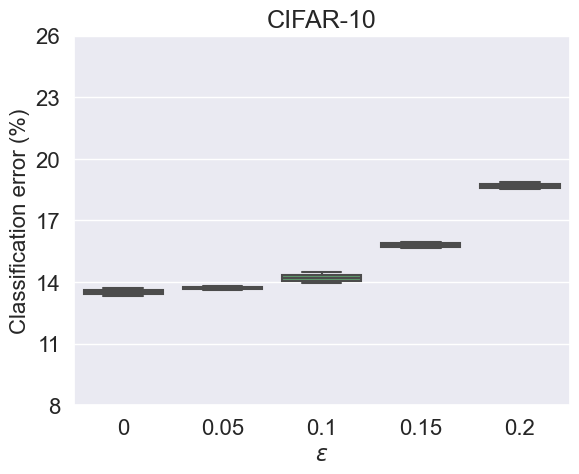}%
    \end{minipage}
    \vspace{-1em}%
    \caption{Box plot of the classification errors for the proposed U$^m$-SSC method tested on learning from 50 U sets with inaccurate class priors ($\epsilon=0$ means \emph{true}; larger $\epsilon$, larger noise).}
    \label{fig:robust}
    \vspace{-1em}%
\end{figure*}

    \begin{table*}[t]
        \caption{Mean errors (standard deviations) over 3 trials in percentage for the proposed U$^m$-SSC method tested on different set sizes. The uniform set size $n_j$ is shifted to $\tau\cdot n_j$ (smaller $\tau$, larger shift). Random means uniformly sample a set size from range $[0,T]$.}
        \label{size_shift_table}
        \begin{center}
        \newcommand{\tabincell}[2]{\begin{tabular}{@{}#1@{}}#2\end{tabular}}
        \begin{tabular}{ c| c | c | c c c c | c} 
        \hline
        Dataset & \tabincell{c}{Sets} & \tabincell{c}{$n_j$} & \tabincell{c}{$\tau=0.8$} & \tabincell{c}{$\tau=0.6$} & \tabincell{c}{$\tau=0.4$} & \tabincell{c}{\textbf{$\tau=0.2$}} & \tabincell{c}{{Random}}\\ 
        \hline
        \multirowcell{2}{MNIST}      
        & 10 & 6000 & 2.83 (0.18) & 2.91 (0.04) & 3.2 (0.2) & 3.19 (0.35) & 2.66 (0.08)\\
		& 50 & 1200 & 2.46 (0.1) & 2.58 (0.08) & 2.76 (0.13) & 2.97 (0.11) & 3.0 (0.12)\\
		\hline
		\multirowcell{2}{Fashion-MNIST}      
        & 10 & 6000 & 7.88 (0.21) & 7.68 (0.36) & 7.84(0.17) & 7.89 (0.24) & 7.04 (0.13)\\
		& 50 & 1200 & 8.29 (0.19) & 8.91 (0.29) & 7.61 (0.55) & 8.8 (0.31) & 8.62 (0.24)\\
		\hline
		\multirowcell{2}{Kuzushiji-MNIST}      
        & 10 & 6000 & 8.98 (0.52) & 9.83 (0.57) & 9.43 (0.41) & 10.03 (0.81) & 8.38 (0.31)\\
		& 50 & 1200 & 9.35 (0.33) & 9.53 (0.64) & 9.89 (0.72) & 11.08 (0.61) & 10.34 (0.73)\\
		\hline
		\multirowcell{2}{CIFAR-10}      
        & 10 & 5000 & 12.55 (0.61) & 12.25 (0.8) & 12.41 (0.35) & 12.49 (0.98) & 11.65 (0.44)\\
		& 50 & 1000 & 12.16 (0.23) & 12.19 (0.75) & 12.88 (0.37) & 13.66 (0.54) & 12.09 (0.42)\\
        \hline
        \end{tabular}
        \end{center}
        \vspace{-0.5em}
    \end{table*} 

\section{Experiments}
\label{sec:exp}%
In this section, we experimentally analyze the proposed method and compare it with state-of-the-art methods in the U$^m$ classification setting.%
\footnote{Our implementation of U$^m$-SSC is available at \url{https://github.com/leishida/Um-Classification}.}

\paragraph{Datasets}
We train on widely adopted benchmarks MNIST, Fashion-MNIST, Kuzushiji-MNIST, and CIFAR-10.
Table~\ref{dataset} briefly summarizes the benchmark datasets.
Since the four datasets contain 10 classes originally, we manually corrupt them into binary classification datasets.
More details about the datasets are in Appendix~\ref{setup-datasets}.

In the experiments, unless otherwise specified,
the number of training data contained in all U sets are the same and fixed as $n_j=n_{\rm{tr}}/m$ for all benchmark datasets;
and the class priors $\{\pi_j\}_{j=1}^m$ of all U sets are randomly sampled from the range $[0.1, 0.9]$ under the constraint that the sampled class priors are not all identical, ensuring that the problem is mathematically solvable.
Given $\{n_j\}_{j=1}^m$ and $\{\pi_j\}_{j=1}^m$, we generate $m$ sets of U training data following \eqref{kusets}.
Note that in most LLP papers, each U set is uniformly sampled from the shuffled U training data, therefore the label proportions of all the U sets are the same in expectation.
As the set size increases, all the proportions converge to the same class prior, making the LLP problem computationally intractable \cite{scott2020learning}.
As shown above, our experimental scheme avoids this issue by determining valid class priors before sampling each U set.

\paragraph{Models}
The models are optimizers used are also described in Table~\ref{dataset}, where MLP refers to \emph{multi-layer perceptron}, ResNet refers to \emph{residual networks} \citep{he16cvpr}, and their detailed architectures are in Appendix~\ref{setup-models}.
As a common practice, we use Adam \citep{kingma15iclr} with the cross-entropy loss for optimization. 
We train 300 epochs for all the experiments, and the classification error rates at the test phase are reported. 
All the experiments are repeated 3 times and the mean values with standard deviations are recorded for each method.

\paragraph{Baselines}
We compare the proposed method with state-of-the-art methods based on the classification risk \cite{scott2020learning} and the empirical proportion risk \cite{tsai2020learning} for the U$^m$ classification problem.
Recall that the proposed learning objective $\widehat{R}_{\rm{U^m}}(f)$ in \citet{scott2020learning} is a combination of the unbiased balanced risk estimators $\widehat{R}_{\rm{U^2}\text{-}\rm{b}}(f)$, which are shown to underperform the unbiased risk estimator $\widehat{R}_{\rm{U^2}}(f)$ in \citet{lu2018minimal}.
So we improve the baseline method of \citet{scott2020learning} by combining $\widehat{R}_{\rm{U^2}}(f)$ instead of $\widehat{R}_{\rm{U^2}\text{-}\rm{b}}(f)$.
As shown in \citet{lu2020mitigating}, the empirical risks $\widehat{R}_{\rm{U^2}}(f)$ can go negative during training which may cause overfitting, so we further improve the baseline by combining the corrected non-negative risk estimators $\widehat{R}_{\rm{U^2}\text{-}\rm{c}}(f)$.
The baselines are summarized as follows:
    \begin{itemize}
        \item MMC$\text{-}{\rm{U^2}\text{-}\rm{b}}$ \cite{scott2020learning}: the classification risk based method in the \emph{multiple mutual contamination}~(MMC) framework, i.e., \eqref{muub};
        \item MMC$\text{-}{\rm{U^2}}$: the method that improves MMC$\text{-}{\rm{U^2}\text{-}\rm{b}}$ with unbiased risk estimators $\widehat{R}_{\rm{U^2}}(f)$;
        \item MMC$\text{-}{\rm{U^2}\text{-}\rm{c}}$: the method that improves MMC$\text{-}{\rm{U^2}}$ with non-negative risk correction $\widehat{R}_{\rm{U^2}\text{-}\rm{c}}(f)$;
        \item LLP-VAT \cite{tsai2020learning}: the empirical proportion risk based method, i.e., \eqref{llpvat}.
    \end{itemize}
More details about the implementation of baselines can be found in Appendix~\ref{setup-otherdetail}.

\subsection{Comparison with State-of-the-art Methods}
\label{exp-sota}
We first compare our proposed method with state-of-the-art methods for the U$^m$ classification problem.
The experimental results of learning from 10 and 50 U sets are reported in Figure~\ref{fig:performance} and a table of the final errors is in Appendix~\ref{negativeloss}.

We can see that the \emph{classification risk} based methods, i.e., MMC$\text{-}{\rm{U^2}\text{-}\rm{b}}$, MMC$\text{-}{\rm{U^2}}$, MMC$\text{-}{\rm{U^2}\text{-}\rm{c}}$, and our proposed U$^m$-SSC method generally outperform the \emph{empirical proportion risk} based method, i.e., LLP-VAT, with lower classification error and more stability,
which demonstrates the superiority of the \emph{consistent} methods.

Within the classification risk based methods, our observations are as follows.
First, the proposed U$^m$-SSC method outperforms others in most cases.
We believe that the advantage comes from the surrogate set classification mechanism in U$^m$-SSC, which implies the \emph{classifier-consistent} methods perform better than the \emph{risk-consistent} methods.
Second, compared to MMC$\text{-}{\rm{U^2}\text{-}\rm{b}}$, we can see our advantage becomes bigger when $\pi_{\mathcal{D}}\approx\frac{1}{2}$ is not the case, e.g., Fashion-MNIST and CIFAR-10.
Moreover, the performance of the improved MMC$\text{-}{\rm{U^2}}$ method (combination of unbiased risk estimators) is better than MMC$\text{-}{\rm{U^2}\text{-}\rm{b}}$ (combination of balanced risk estimators) in all cases.
These empirical findings corroborate our analysis that the balanced classification risk \eqref{balrisk} can be biased in such cases.
Third, we confirm that the training risks of MMC$\text{-}{\rm{U^2}\text{-}\rm{b}}$ and MMC$\text{-}{\rm{U^2}}$ go negative as training proceeds, which incurs overfitting.
Other methods do not have this negative empirical training risk issue.
And we can see that the improved MMC$\text{-}{\rm{U^2}\text{-}\rm{c}}$ method effectively mitigates this overfitting but its performance is still inferior to our proposed method.
These results are consistent with the observations in \citet{lu2020mitigating}.
We also notice that the empirical training risks of the proposed U$^m$-SSC method are obviously higher than other baseline methods. 
This is due to the fact that the added transition layer rescales the range of model output.
We provide a detailed analysis on this point in Appendix~\ref{negativeloss}.
A notable effect is that a relatively small learning rate is more suitable for our method.

\subsection{On the Variation of Set Size}
In practice, the size of the U sets may vary from a large range depends on different tasks.
However, as the set size varies, given the data generation process in \eqref{ptr}, the marginal density of our training data $p_{\rm{tr}}(\boldsymbol{x})$ shifts from that of the test one, which may cause severe \emph{covariate shift} \citep{shimodaira2000improving,zhang2020one}.
To verify the robustness of our proposed method against covariate shift, we conducted experiments on the variation of set size. 
Recall that in other experiments, we use uniform set size, i.e., all sets contain $n_{\rm{tr}}/m$ U data. 
In this subsection, we investigate two kinds of set size shift:
    \begin{itemize}
        \item Randomly select $\lceil m/2\rceil$ U sets and change their set sizes to $\tau\cdot n_{\rm{tr}}/m$ where $\tau\in [0, 1]$;
        \item Randomly sample each set size $n_j$ from range $[0,n_{\rm{tr}}]$ such that $\sum_{j = 1}^m n_j=n_{\rm{tr}}$.
    \end{itemize}
As shown in Table~\ref{size_shift_table}, the proposed method is reasonably robust as $\tau$ moves towards 0 in the first shift setting. 
The slight performance degradation may come from the decreased total number of training samples $n_{\rm{tr}}$ as $\tau$ decreases.
We also find that our method reaches the best performance in 3 out of 4 benchmark datasets in the second shift setting. 
Since it is a more natural way for generating set sizes, the robustness of the proposed method on varied set sizes can be verified.

\begin{table}[t]
\centering
\caption{Mean test errors (standard deviations) over 3 trials in percentage for the U$^m$-SSC method tested with different set numbers.}
\newcommand{\tabincell}[2]{\begin{tabular}{@{}#1@{}}#2\end{tabular}}
\begin{center}
\begin{tabular}{ c| c c c c} 
\hline
Dataset & \tabincell{c}{2 sets} & \tabincell{c}{100 sets} & \tabincell{c}{500 sets} & \tabincell{c}{1000 sets}\\
\hline
\multirow{2}{*}{MNIST}      
    & 2.36 & 2.59 & 3.07 & 2.84\\
    & (0.27) & (0.11) & (0.14) & (0.16)\\
\hline
\multirow{2}{*}{\tabincell{c}{Fashion-\\MNIST}}      
& 7.95 & 8.61 & 8.50 & 8.64\\
& (0.85) & (0.43) & (0.44) & (0.26)\\
\hline
\multirow{2}{*}{\tabincell{c}{Kuzushiji-\\MNIST}}   
& 9.68 & 10.20 & 11.64 & 10.68\\
& (0.96) & (0.50) & (0.54) & (0.40)\\
\hline
\multirow{2}{*}{CIFAR-10}      
& 13.21 & 13.30 & 12.98 & 13.16\\
& (1.26) & (0.65) & (0.37) & (0.70)\\
\hline
\end{tabular}
\end{center}
\label{sets_shift_table}
\vspace{-1.5em}
\end{table}

\subsection{On the Variation of Set Number}
\label{setnum}
Another main factor that may affect the performance is the number of available U sets.
As the U data can be easily collected from multiple sources, the learning algorithm is expected to be able to handle the variation of set numbers well.
The experimental results of learning from 10 and 50 U sets have been shown in Section~\ref{exp-sota}. 
In this subsection, we test the proposed U$^m$-SSC method on extremely small set numbers e.g., $m$ = 2, and large set numbers, e.g., $m$=1000.
The experimental results of learning form 2, 100, 500, and 1000 U sets are reported in Table \ref{sets_shift_table}. 

From the results, we can see that the performance of the proposed method is reasonably well on different set numbers.
In particular, a lower classification error can be observed for $m=2$ across all 4 benchmark datasets. 
The better performance may come from the larger number of U data contained in a single set, i.e., $n_{\rm{tr}}/2$ in this case.
Since our method uses class priors as the only weak supervision, an increasing number of the sampled data within each U set guarantees a better approximation of them.
These experimental results demonstrate the effectiveness of the proposed method on the variation of set numbers.
We note that it is a clear advantage over the LLP methods, whose performance drops significantly when the set number becomes small and set size becomes large, because label proportions converge to the same class prior in their setup, making the LLP problem computationally intractable.

\subsection{Robustness against Inaccurate Class Priors}
Hitherto, we have assumed that the values of class priors $\{\pi_j\}_{j=1}^{m}$ are accessible and accurately used in the construction of our method, which may not be true in practice. 
In order to simulate U$^m$ classification in the wild, where we may suffer some errors from estimating the class priors, we design experiments that add noise to the true class priors.
More specifically, we test the U$^m$-SSC method by replacing $\pi_j$ with the noisy
$\pi_j'=\pi_j+\gamma\cdot\epsilon$, where $\gamma$ uniform randomly take values in $\{+1,-1\}$ and $\epsilon\in\{0, 0.05,0.1,0.15,0.2\}$, so that the method would treat noisy $\pi_j'$ as the true $\pi_j$ during the whole learning process.
The experimental setup is exactly same as before except the replacement of $\pi_j$.
Note that we tailor the noisy $\pi_j$ to $[0, 1]$ if it surpasses the range.

The results on learning from 50 U sets with inaccurate class priors are reported in Figure~\ref{fig:robust} and a table of the final test errors is in Appendix~\ref{inaccurate}, where $\epsilon=0$ means true class priors.
We can see that our method works reasonably well using noisy $\pi_j'$, though the classification error slightly increases for higher noise level $\epsilon$ which is as expected.

\section{Conclusions}
\label{sec:concl}%
In this work, we focused on learning from multiple sets of U data and proposed a new method based on a surrogate set classification task.
We bridged the original and surrogate class-posterior probabilities via a linear-fractional transformation, and then studied its properties.
Based on them, we proposed the U$^m$-SSC algorithm and implemented it by adding a transition layer to the neural network.
We also proved that the U$^m$-SSC method is classifier-consistent and established an estimation error bound for it.
Extensive experiments demonstrated that the proposed method could successfully train binary classifiers from multiple U sets, and it compared favorably with state-of-the-art methods.

\subsubsection*{Acknowledgments}
The authors would like to thank Tianyi Zhang, Wenkai Xu, Nontawat Charoenphakdee, and Yuting Tang for helpful discussions. 
NL was supported by the MEXT scholarship No.\ 171536 and MSRA D-CORE Program.
GN and MS were supported by JST AIP Acceleration Research Grant Number
JPMJCR20U3, Japan.
MS was also supported by the Institute for AI and Beyond, UTokyo.
\clearpage

\bibliography{references}
\bibliographystyle{icml2021}

\clearpage
\onecolumn
\begin{center}
\LARGE Supplementary Material
\end{center}{}

\appendix
\section{Proofs}
\label{sec:proof}%

In this appendix, we prove all theorems.

\subsection{Proof of Theorem~\ref{thm:transformation}}
\label{subsec:thm1}%
On one hand, $\forall j \in [1,\ldots,m]$ we have
\begin{align}
\bar{\eta}_j(\boldsymbol{x})&=\frac{p(\boldsymbol{x}, \bar{y}=j)}{\bar{p}(\boldsymbol{x})}\notag\\
&=\frac{p(\boldsymbol{x}\mid \bar{y}=j)\cdot p(\bar{y}=j)}{\bar{p}(\boldsymbol{x})}\notag\\
&=\frac{\rho_j\cdot[\pi_j\cdot p_p(\boldsymbol{x})+(1-\pi_j)\cdot p_n(\boldsymbol{x})]}{\sum^m_{j=1}\rho_j\cdot[\pi_j\cdot p_p(\boldsymbol{x})+(1-\pi_j)\cdot p_n(\boldsymbol{x})]}. 
\label{proof1}
\end{align}
In the third equality, we substitute $p(\boldsymbol{x}\mid \bar{y}=j)$ with $p_{\rm{tr}}$ that is defined in \eqref{ptr}. 
On the other hand, by Bayes' rule we have
\begin{align}
\label{proof2-1}
&p_p(\boldsymbol{x})=p(\boldsymbol{x}\mid y = +1) = \frac{p(y=+1\mid \boldsymbol{x})\cdot p(\boldsymbol{x})}{p(y=+1)}=\frac{\eta(\boldsymbol{x})\cdot p(\boldsymbol{x})}{\pi_{\mathcal{D}}},\\
&p_n(\boldsymbol{x})=p(\boldsymbol{x}\mid y = -1) = \frac{p(y=-1\mid \boldsymbol{x})\cdot p(\boldsymbol{x})}{p(y=-1)}=\frac{(1-\eta(\boldsymbol{x}))\cdot p(\boldsymbol{x})}{1-\pi_{\mathcal{D}}}. \label{proof2}
\end{align}
Then, we plug \eqref{proof2-1} and \eqref{proof2} into \eqref{proof1} and obtain
\begin{align*}
\bar{\eta}_j(\boldsymbol{x})&=\frac{\rho_j\cdot[\pi_j \eta(\boldsymbol{x})  \cdot(1 - \pi_{\mathcal{D}})+(1-\pi_j)\cdot (1-\eta(\boldsymbol{x}))\cdot \pi_{\mathcal{D}}]}{\sum^m_{j=1}\rho_j\cdot[\pi_j \eta(\boldsymbol{x}) \cdot(1 - \pi_{\mathcal{D}})+(1-\pi_j)\cdot (1-\eta(\boldsymbol{x}))\cdot \pi_{\mathcal{D}}]}\\
&=\frac{\rho_j\cdot(\pi_j - \pi_{\mathcal{D}})\cdot\eta(\boldsymbol{x})+\rho_j\cdot(1-\pi_j)\cdot\pi_{\mathcal{D}}}{\sum^m_{j=1}\rho_j\cdot(\pi_j - \pi_{\mathcal{D}})\cdot\eta(\boldsymbol{x})+\sum^m_{j=1}\rho_j\cdot(1-\pi_j)\cdot\pi_{\mathcal{D}}}.
\end{align*}
By setting the coefficients $a_j$, $b_j$, $c$, $d$ accordingly we conclude the proof.
\qed

\subsection{Proof of Lemma~\ref{lemma_injective}}
\label{subsec:lemma_jnjective}%
We proceed the proof by firstly showing that the denominator of each function $T_j(t)$, $j={1,\ldots,m}$, is strictly greater than zero for all $t \in [0,1]$, and then showing that $\boldsymbol{T}(t_1)=\boldsymbol{T}(t_2)$ if and only if $t_1 = t_2$.

For all $j={1,\ldots,m}$, the denominators of $T_j(t)$ are the same, i.e., $c\cdot t+d$, where $c = \sum_{j=1}^m{\rho}_j(\pi_j-\pi_{\mathcal{D}})$ and $d = \sum_{j=1}^m{\rho}_j\pi_{\mathcal{D}}(1-\pi_j)$. We know that $d$ is positive because $\rho_j > 0$, $\pi_\mathcal{D} > 0$, and there exists $j \in {1,\ldots,m}$ such that $\pi_j<1$.
       Given that $t \in [0, 1]$, we discuss the sign of $c$ as follows:
       \begin{enumerate}
            \item if $c\geq0$, the minimum value of $c\cdot t + d$ is $c\cdot 0 + d=d>0$;
            \item if $c<0$, the minimum value of $c\cdot t + d$ is $c\cdot 1 + d=\sum_{j=1}^m{\rho}_j(\pi_j-\pi_{\mathcal{D}})+\sum_{j=1}^m{\rho}_j\pi_{\mathcal{D}}(1-\pi_j)=\sum_{j=1}^m{\rho}_j\pi_j(1-\pi_{\mathcal{D}})>0$, where the last inequality is due to the existence of $j \in {1,\ldots,m}$ such that $\pi_j>0$.
       \end{enumerate}
       Hitherto, we have shown that the denominator $c\cdot t + d>0$. 
       Next, we prove the one-to-one mapping property by contradiction. 
       Assume that there exist $t_1, t_2 \in [0, 1]$ such that $t_1\neq t_2$ but $\boldsymbol{T}(t_1)=\boldsymbol{T}(t_2)$, which indicates that $T_j(t_1)=T_j(t_2), \forall j = {1,\ldots,m}$. For all $j$, we have
       \begin{align}
           T_j(t_1)-T_j(t_2)&=\frac{a_j\cdot t_1+b_j}{c\cdot t_1+d}-\frac{a_j\cdot t_2+b_j}{c\cdot t_2+d}\notag\\
           &=\frac{(a_j\cdot t_1+b_j)((c\cdot t_2+d))-(a_j\cdot t_2+b_j)((c\cdot t_1+d))}{(c\cdot t_1+d)(c\cdot t_2+d)}\notag\\
           \label{proof_jnjective}
           &=\frac{(t_1-t_2)(a_j\cdot d-b_j\cdot c)}{(c\cdot t_1+d)(c\cdot t_2+d)}\\
           &=0\notag,
       \end{align}
       where $a_j = \rho_j\cdot(\pi_j - \pi_{\mathcal{D}})$ and $b_j = \rho_j\cdot(1 - \pi_j)\cdot\pi_{\mathcal{D}}$. 
       As shown previously, the denominator of \eqref{proof_jnjective} is non-zero for all $j$.
       Next we show that there exists $j \in {1,\ldots,m}$ such that $a_j\cdot d-b_j c\neq0$.
       Since $c$ and $d$ are constants and irrelevant to $i$, we have 
       \begin{equation}\nonumber
       \begin{split}
           a_j\cdot d - b_j \cdot c &= (\rho_j\cdot(\pi_j - \pi_{\mathcal{D}}))\cdot d-(\rho_j\cdot(1 - \pi_j)\cdot\pi_{\mathcal{D}})\cdot c\\
           &=\rho_j\cdot (\pi_j\cdot d-\pi_{\mathcal{D}}\cdot d-c+\pi_j\cdot c)\\
           &=\rho_j\cdot (\pi_j\cdot (c + d)-\pi_{\mathcal{D}}\cdot d-c).
       \end{split}
       \end{equation}
        This equation equals to zero if and only if $\pi_j = \frac{c + \pi_{\mathcal{D}}\cdot d}{c+d}$.
        According to our assumption that at least two of the U sets are different, $\exists j' \in {1,\ldots,m}$ such that $\pi_{j'} \neq \frac{c + \pi_{\mathcal{D}}\cdot d}{c+d}$. 
        For such $j'$, $T_{j'}(t_1)=T_{j'}(t_2)$ if and only if $t_1=t_2$, which leads to a contradiction since $t_1\neq t_2$.
        So we conclude the proof that $\boldsymbol{T}(t_1)=\boldsymbol{T}(t_2)$ if and only if $t_1 = t_2$.
\qed

\subsection{Proof of Lemma~\ref{lemma_g_star}}
\label{subsec:lemma_g_star}%
We provide a proof of the cross-entropy loss and mean squared error, which are commonly used losses because of their numerical stability and good convergence rate \cite{de2005tutorial, allen1971mean}.

\paragraph{Cross-entropy loss} Since the cross-entropy loss is non-negative by its definition, minimizing $R_{\rm{surr}}(\boldsymbol{g})$ can be obtained by minimizing the conditional risk $\mathbb{E}_{p(\bar{y}\mid \boldsymbol{x})}[\ell(\boldsymbol{g}(\boldsymbol{x}),\bar{y})\mid \boldsymbol{x}]$ for every $\boldsymbol{x} \in \mathcal{X}$. So we are now optimizing
        \begin{equation} \nonumber
            \begin{gathered}
                \phi(\boldsymbol{g})=-\sum_{j=1}^{m}p(\bar{y}=j\mid \boldsymbol{x})\cdot \log(g_j(\boldsymbol{x})), \quad {\rm{s.t.}} \sum_{j=1}^{m}g_j(\boldsymbol{x})=1.
            \end{gathered}
        \end{equation}
        By using the Lagrange multiplier method \cite{bertsekas1997nonlinear}, we have
        \begin{equation} \nonumber
            \mathcal{L}=-\sum_{j=1}^{m}p(\bar{y}=j\mid \boldsymbol{x})\cdot \log(g_j(\boldsymbol{x}))-\lambda\cdot(\sum_{j=1}^{m}g_j(\boldsymbol{x})-1).
        \end{equation}
        The derivative of $\mathcal{L}$ with respect to $\boldsymbol{g}$ is
        \begin{equation} \nonumber
            \frac{\partial\mathcal{L}}{\partial\boldsymbol{g}} = [-\frac{p(\bar{y}=1\mid \boldsymbol{x})}{g_1(\boldsymbol{x})}-\lambda,\cdot\cdot\cdot,-\frac{p(\bar{y}=m\mid \boldsymbol{x})}{g_m(\boldsymbol{x})}-\lambda]^{\top}.
        \end{equation}
        By setting this derivative to 0 we obtain
        \begin{equation} \nonumber
            g_j(\boldsymbol{x})=-\frac{1}{\lambda}\cdot p(\bar{y}=j\mid \boldsymbol{x}), \quad \forall j=1,\ldots,m \ \ {\rm{and}}\ \  \forall \boldsymbol{x} \in \mathcal{X}.    
        \end{equation}
        Since $\boldsymbol{g} \in \Delta^{m-1}$ is the $m$-dimensional simplex, we have $\sum_{j=1}^{m}g_j^{\star}(\boldsymbol{x})=1$ and $\sum_{j=1}^{m}p(\bar{y}=j\mid \boldsymbol{x})=1$.
        Then
        \begin{equation} \nonumber
            \sum_{j=1}^{m}g_j^{\star}(\boldsymbol{x})=-\frac{1}{\lambda}\cdot\sum_{j=1}^{m}p(\bar{y}=j\mid \boldsymbol{x})=1.
        \end{equation}
        Therefore we obtain $\lambda=-1$ and $g_j^{\star}(\boldsymbol{x})=p(\bar{y}=j\mid \boldsymbol{x})=\bar{\eta}_j(\boldsymbol{x}),\forall j = 1,\ldots,m$ and $\forall \boldsymbol{x} \in \mathcal{X}$, which is equivalent to $\boldsymbol{g}^{\star} = \bar{\boldsymbol{\eta}}$. Note that when $m=2$, the softmax is reduce to sigmoid function and the cross-entropy is reduced to logistic loss $\ell_{\rm{log}}(z)=\ln(1+\exp(-z))$.
        
        \paragraph{Mean squared error} 
        Similarly to the cross-entropy loss, we transform the risk minimization problem to the following constrained optimization problem
        \begin{equation} \nonumber
            \begin{gathered}
                \phi(\boldsymbol{g})=\sum_{j=1}^{m}(p(\bar{y}=j\mid \boldsymbol{x})-g_j(\boldsymbol{x}))^2, \quad {\rm{s.t.}} \sum_{j=1}^{m}g_j(\boldsymbol{x})=1.
            \end{gathered}
        \end{equation}
        By using the Lagrange multiplier method, we obtain
        \begin{equation} \nonumber
            \mathcal{L}=\sum_{j=1}^{m}(p(\bar{y}=j\mid \boldsymbol{x})-g_j(\boldsymbol{x}))^2-\lambda\cdot(\sum_{j=1}^{m}g_j(\boldsymbol{x})-1).
        \end{equation}
        The derivative of $\mathcal{L}$ with respect to $\boldsymbol{g}$ is
        \begin{equation} \nonumber
            \frac{\partial\mathcal{L}}{\partial\boldsymbol{g}} = [2g_1(\boldsymbol{x})-2p(\bar{y}=1\mid \boldsymbol{x})-\lambda,\cdot\cdot\cdot,2g_m(\boldsymbol{x})-2p(\bar{y}=m\mid \boldsymbol{x})-\lambda]^{\top}.
        \end{equation}
        By setting this derivative to 0 we obtain
        \begin{equation} \nonumber
            g_j(\boldsymbol{x})=p(\bar{y}=j\mid \boldsymbol{x})+\frac{\lambda}{2}.
        \end{equation}
        Since $\sum_{j=1}^{m}g_j^{\star}(\boldsymbol{x})=1$ and $\sum_{j=1}^{m}p(\bar{y}=j\mid \boldsymbol{x})=1$, we have
        \begin{align*}
        \sum_{j=1}^{m}g_j^{\star}(\boldsymbol{x})&=\sum_{j=1}^{m}p(\bar{y}=j\mid \boldsymbol{x})+\frac{\lambda\cdot m}{2},\\
        \frac{\lambda\cdot m}{2}&=0.
        \end{align*}
        Since $m \geq 2$, we can obtain $\lambda=0$. Consequently, $g_j^{\star}(\boldsymbol{x})=p(\bar{y}=j\mid \boldsymbol{x})=\bar{\eta}_j(\boldsymbol{x})$, which leads to $\boldsymbol{g}^{\star}=\bar{\boldsymbol{\eta}}$.
        We conclude the proof.
\qed

\subsection{Proof of Theorem~\ref{theorem2}}
\label{subsec:thm4_appendix}%
According to Lemma \ref{lemma_g_star}, when a cross-entropy loss or mean squared error is used for $\ell$, the mapping $\boldsymbol{g}^{\star}(\boldsymbol{x})=\bar{\boldsymbol{\eta}}(\boldsymbol{x})$ is the unique minimizer of $R_{\rm{surr}}(\boldsymbol{g};\ell)$.
Let $\boldsymbol{g(x)}=\boldsymbol{T}(f(\boldsymbol{x}))$, since $\boldsymbol{g}^{\star}\in\cG$, $R_{\rm{surr}}(\boldsymbol{g}(\boldsymbol{x}))=\mathbb{E}_{(\boldsymbol{x},\Bar{y})\sim\Bar{\mathcal{D}}}[\ell(\boldsymbol{g}(\boldsymbol{x}), \bar{y})]$ achieves its minimum if and only if $\boldsymbol{g}(\boldsymbol{x})=\boldsymbol{\bar{\eta}}(\boldsymbol{x})=\boldsymbol{g^{\star}}(\boldsymbol{x})$. 
Combining this result with Theorem \ref{theorem1} and Lemma \ref{lemma_injective}, we then obtain that $\boldsymbol{g}(\boldsymbol{x})=\bar{\boldsymbol{\eta}}(\boldsymbol{x})$ if and only if $f(\boldsymbol{x})=\eta(\boldsymbol{x})$.
Since
\begin{align*}
R_{\rm{surr}}(f)&=\mathbb{E}_{(\boldsymbol{x},\Bar{y})\sim\Bar{\mathcal{D}}}[\ell(\boldsymbol{T}(f(\boldsymbol{x})), \bar{y})]\notag\\
&=\mathbb{E}_{(\boldsymbol{x},\Bar{y})\sim\Bar{\mathcal{D}}}[{\ell}(\boldsymbol{g}(\boldsymbol{x}), \bar{y})]= R_{\rm{surr}}(\boldsymbol{g}),
\end{align*}
$f_{\rm{surr}}^\star$ is induced by $\boldsymbol{g}^{\star}=\argmin_{\boldsymbol{g}}R_{\rm{surr}}(\boldsymbol{g})$.
So we have $f_{\rm{surr}}^\star(\boldsymbol{x}) = \argmin_{f}R_{\rm{surr}}(f) = \eta(\boldsymbol{x})$.
    
On the other hand, when $\ell_{\rm{b}}$ is a cross-entropy loss, i.e., the logistic loss in the binary case, or mean squared error, the mapping $f^{\star}$ is the unique minimizer of $R(f;\ell_{\rm{b}})$. 
We skip the proof since it is similar to the proof of Lemma \ref{lemma_g_star}. 
So we obtain that $f^{\star}(\boldsymbol{x})=\eta(\boldsymbol{x})=f_{\rm{surr}}^{\star}(\boldsymbol{x})$, which concludes the proof.
\qed

\subsection{Proof of Lemma~\ref{lemma4}}
\label{subsec:lemma4_proof}%
$\forall j \in {1,\ldots,m}$, by taking derivative of $T_j(t)$ with respect to $t$, we obtain
    \begin{equation}
            \left|\frac{\partial T_j}{\partial t}\right|=\frac{|a_j d-b_j c|}{(c\cdot t+d)^2},
        \label{eq:derivative}
    \end{equation}
    where
    \begin{equation} \nonumber
        \begin{gathered}
            a_j = {\rho}_j(\pi_j-\pi_{\mathcal{D}}),\quad b_j = {\rho}_j\pi_{\mathcal{D}}(1-\pi_j),\quad c = \sum_{j=1}^m{\rho}_j(\pi_j-\pi_{\mathcal{D}}),\quad d = \sum_{j=1}^m{\rho}_j\pi_{\mathcal{D}}(1-\pi_j).
        \end{gathered}
    \end{equation}
    Since for all class priors we have $0\leq \pi_j\leq 1$, $0<\pi_D<1$, $0< \rho_j< 1$, $\sum_{j=1}^m\rho_j=1$, and $\exists j, j' \in \{1,\ldots,m\}$ such that $j \neq j' $ and $\pi_j \neq \pi_{j'}$, obviously we can obtain
    \begin{equation} \nonumber
            -1 \leq a_j \leq 1,\quad 0 \leq b_j \leq 1,\quad -1 \leq c \leq 1,\quad {\rm{and}}\ 0 < d \leq 1.
    \end{equation}
    Therefore, the numerator of \eqref{eq:derivative} satisfies
    \begin{equation}
        |a_jd-b_j c| \leq 2.
    \label{numerator}
    \end{equation}
    On the other hand, since $d > 0$ and $0 \leq t\leq 1$ , by substituting $t=0$ and $t=1$ respectively, we can obtain
    \begin{equation}\nonumber
    \begin{gathered}
        c\cdot t + d \geq c + d=\sum_{j=1}^m{\rho}_j\pi_j(1-\pi_{\mathcal{D}}) > 0,\quad {\rm{if}}\  c < 0; \\
        c\cdot t + d \geq d > 0,\quad {\rm{if}}\  c \geq 0. \\
    \end{gathered}
    \end{equation}
    Next we lower bound this term by $c\cdot t + d \geq \min(c+d,d) > 0$. 
    As a result, the denominator of \eqref{eq:derivative} satisfies
    \begin{equation}
        \begin{split}
            (c\cdot t+d)^2 &\geq \left(\min(c+d,d)\right)^2 \\
            &= \left(\min\left(\sum_{j=1}^m{\rho}_j\pi_j(1-\pi_{\mathcal{D}}),\sum_{j=1}^m{\rho}_j\pi_{\mathcal{D}}(1-\pi_j)\right)\right)^2 \\
            &= \alpha^2.
        \end{split}
        \label{denominator}
    \end{equation}
    Then, by combing \eqref{numerator} and \eqref{denominator}, we have
    \begin{equation} \nonumber
        \left|\frac{\partial T_j}{\partial t}\right| \leq \frac{2}{\alpha^2}.
    \end{equation}
    This bound illustrates that $T_j(f(\boldsymbol{x}))$ is Lipschitz-continuous with respect to $f(\boldsymbol{x})$ with a Lipschitz constant $2/\alpha^2$ and we complete the proof.
    \qed

\subsection{Proof of Theorem~\ref{error_bound}}
\label{subsec:thm4_proof}%
We first introduce the following lemmas which are useful to derive the estimation error bound.
\begin{lemma} [Uniform deviation bound]
    Let $\boldsymbol{g} \in \mathcal{G}$, where $\mathcal{G}=\{\boldsymbol{x}\mapsto \boldsymbol{T}(f(\boldsymbol{x}))\mid f \in \mathcal{F} \}$ is a class of measurable functions,
    ${\mathcal{X}}_{\rm{tr}} = \{(\boldsymbol{x}_i, \Bar{y}_i)\}_{i=1}^{n_{\rm{tr}}}\stackrel{\mathrm{i.i.d.}}{\sim}\Bar{\mathcal{D}}$ be a fixed sample of size $n_{\rm{tr}}$ i.i.d. drawn from $\bar{\mathcal{D}}$,
    and $\{\sigma_1,\ldots,\sigma_{n_{\rm{tr}}}\}$ be the Rademacher variables, i.e., independent uniform random variables taking values in $\{-1, 1\}$.
    Let $\mathfrak{R}_{n_{\rm{tr}}}({\ell}\circ\mathcal{G})$ be the Rademacher complexity of ${\ell}\circ\mathcal{G}$ which is defined as
    \begin{equation} \nonumber
        \mathfrak{R}_{n_{\rm{tr}}}({\ell}\circ\mathcal{G})=\mathbb{E}\left[\sup_{\boldsymbol{g}\in\mathcal{G}}\frac{1}{n_{\rm{tr}}}\sum_{i=1}^{n_{\rm{tr}}}\sigma_i{\ell}(\boldsymbol{g}(\boldsymbol{x}_i),\bar{y}_i)\right].
    \end{equation}
    Under the assumptions of Theorem~\ref{error_bound}, ${\ell}(\boldsymbol{g}(\boldsymbol{x}), \bar{y})$ is upper-bounded by $M_{\ell}$. 
    Then, for any $\delta>0$, we have with probability at least $1 - \delta$,
    \begin{equation}\nonumber
        \sup_{\boldsymbol{g}\in \mathcal{G}}|\widehat{R}_{\rm{surr}}(\boldsymbol{g})-R_{\rm{surr}}(\boldsymbol{g})| \leq 2\mathfrak{R}_{n_{\rm{tr}}}({\ell}\circ\mathcal{G})+M_{\ell}\sqrt{\frac{\ln(2/\delta)}{2n_{\rm{tr}}}}.
    \end{equation}
    \label{lemma7}
    \end{lemma}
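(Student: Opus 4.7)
The plan is to establish this uniform deviation bound via the classical two-step argument underpinning Rademacher-complexity based generalization bounds: first apply McDiarmid's bounded-differences inequality to a sample-dependent one-sided supremum, and then control its expectation by a symmetrization argument that introduces an independent ghost sample and Rademacher variables.

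For the concentration step, I would work with the two one-sided functionals
\begin{align*}
\Phi^+(\mathcal{X}_{\rm tr}) &= \sup_{\boldsymbol{g} \in \mathcal{G}} \bigl(\widehat{R}_{\rm surr}(\boldsymbol{g}) - R_{\rm surr}(\boldsymbol{g})\bigr), \\
\Phi^-(\mathcal{X}_{\rm tr}) &= \sup_{\boldsymbol{g} \in \mathcal{G}} \bigl(R_{\rm surr}(\boldsymbol{g}) - \widehat{R}_{\rm surr}(\boldsymbol{g})\bigr),
\end{align*}
noting that $\sup_{\boldsymbol{g}\in \mathcal{G}}|\widehat{R}_{\rm surr}(\boldsymbol{g}) - R_{\rm surr}(\boldsymbol{g})| = \max(\Phi^+, \Phi^-)$. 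Because $0 \leq \ell \leq M_\ell$ and $\widehat{R}_{\rm surr}$ is an unweighted average of $n_{\rm tr}$ loss values, replacing any single training pair $(\boldsymbol{x}_i, \bar y_i)$ by an independent copy alters $\widehat{R}_{\rm surr}(\boldsymbol{g})$ by at most $M_\ell/n_{\rm tr}$ uniformly in $\boldsymbol{g}$, so both $\Phi^+$ and $\Phi^-$ change by at most the same amount. McDiarmid's inequality then yields, for each sign,
\[
\Pr\!\left(\Phi^\pm \geq \mathbb{E}[\Phi^\pm] + M_\ell\sqrt{\tfrac{\ln(2/\delta)}{2n_{\rm tr}}}\right) \leq \delta/2,
\]
and a union bound ensures that both deviations are controlled simultaneously with probability at least $1-\delta$.

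For the symmetrization step, I would bound $\mathbb{E}[\Phi^+]$ (and $\mathbb{E}[\Phi^-]$ analogously) by introducing an independent ghost sample $\{(\boldsymbol{x}_i', \bar y_i')\}_{i=1}^{n_{\rm tr}} \sim \bar{\mathcal{D}}$. Writing $R_{\rm surr}(\boldsymbol{g})$ as the expectation of the ghost empirical risk, pushing the supremum inside that ghost expectation via Jensen's inequality, and then inserting Rademacher variables $\sigma_i$ (justified because $\ell(\boldsymbol{g}(\boldsymbol{x}_i), \bar y_i) - \ell(\boldsymbol{g}(\boldsymbol{x}_i'), \bar y_i')$ is distributed identically to its sign-flipped counterpart) produces $\mathbb{E}[\Phi^\pm] \leq 2\mathfrak{R}_{n_{\rm tr}}(\ell \circ \mathcal{G})$, after splitting the resulting symmetrized supremum of a difference into the sum of two copies of $\mathfrak{R}_{n_{\rm tr}}(\ell \circ \mathcal{G})$ via the symmetry of $\sigma_i$.

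Combining the two ingredients delivers the stated inequality. There is no substantive obstacle: this is the textbook uniform-convergence bound, as in Mohri et al.\ (2012, Theorem 3.3) or Shalev-Shwartz \& Ben-David (2014, Chapter 26), both already cited. The only care required is in the two-sided formulation---because the paper defines $\mathfrak{R}_{n_{\rm tr}}$ without an absolute value inside, separating $\Phi^+$ from $\Phi^-$ and applying McDiarmid to each with failure probability $\delta/2$ is precisely what makes the coefficient $2$ on the Rademacher complexity and the factor $\ln(2/\delta)$ under the square root come out exactly as stated.
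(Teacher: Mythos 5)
Your proposal is correct and follows essentially the same route as the paper's own proof: McDiarmid's bounded-differences inequality (with increment $M_\ell/n_{\rm tr}$) applied to each one-sided supremum at failure probability $\delta/2$, symmetrization via a ghost sample to bound each expectation by $2\mathfrak{R}_{n_{\rm tr}}(\ell\circ\mathcal{G})$, and a union bound over the two sides. Your explicit remark on why the two-sided statement forces the $\delta/2$ split and the $\ln(2/\delta)$ factor is a point the paper handles identically, just more tersely (``the other side \ldots can be bounded similarly'').
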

    \begin{proof}
        We consider the one-side uniform deviation $\sup_{\boldsymbol{g}\in\mathcal{G}}\widehat{R}_{\rm{surr}}(\boldsymbol{g})-R_{\rm{surr}}(\boldsymbol{g})$. Suppose that a sample $(\boldsymbol{x}_i, \bar{y}_i)$ is replaced by another arbitrary sample $(\boldsymbol{x}_j, \bar{y}_j)$, the change of $\sup_{\boldsymbol{g}\in\mathcal{G}}\widehat{R}_{\rm{surr}}(\boldsymbol{g})-R_{\rm{surr}}(\boldsymbol{g})$ is no more than $M_{\ell}/n_{\rm{tr}}$, since the loss ${\ell}(\cdot)$ is bounded by $M_{\ell}$. 
        By applying the McDiarmid's inequality \cite{mcdiarmid1989method}, for all $\epsilon'>0$ we have
        \begin{align*}
            \pr\{\sup\nolimits_{\boldsymbol{g}\in\mathcal{G}}\widehat{R}_{\rm{surr}}(\boldsymbol{g})-R_{\rm{surr}}(\boldsymbol{g})-\mathbb{E}[\sup\nolimits_{\boldsymbol{g}\in\mathcal{G}}\widehat{R}_{\rm{surr}}(\boldsymbol{g})-R_{\rm{surr}}(\boldsymbol{g})]\geq\epsilon'\}\leq\exp{\left(\frac{-2n_{\rm{tr}}\epsilon'^{2}}{M_{\ell}^2}\right)}.
        \end{align*}
        Equivalently, for any $\delta>0$, with probability at least $1-\delta/2$,
        \begin{equation} \nonumber
            \sup_{\boldsymbol{g}\in\mathcal{G}}\widehat{R}_{\rm{surr}}(\boldsymbol{g})-R_{\rm{surr}}(\boldsymbol{g})\leq\mathbb{E}\left[\sup_{\boldsymbol{g}\in\mathcal{G}}\widehat{R}_{\rm{surr}}(\boldsymbol{g})-R_{\rm{surr}}(\boldsymbol{g})\right]+M_{\ell}\sqrt{\frac{\ln(2/\delta)}{2n_{\rm{tr}}}}.
        \end{equation}
        By \emph{symmetrization} \cite{vapnik98SLT}, it is a routine work to show that
        \begin{equation} \nonumber
        \mathbb{E}\left[\sup_{\boldsymbol{g}\in\mathcal{G}}\widehat{R}_{\rm{surr}}(\boldsymbol{g})-R_{\rm{surr}}(\boldsymbol{g})\right]\leq 2\mathfrak{R}_{n_{\rm{tr}}}({\ell}\circ\mathcal{G}).
        \end{equation}
        The other side uniform deviation $\sup_{\boldsymbol{g}\in\mathcal{G}}R_{\rm{surr}}(\boldsymbol{g})-\widehat{R}_{\rm{surr}}(\boldsymbol{g})$ can be bounded similarly.
        By combining the two sides' inequalities, we complete the proof.
    \end{proof}
    
\begin{lemma}
    Let $f \in \mathcal{F}$, where $\mathcal{F}=\{f:\mathcal{X} \rightarrow\mathbb{R}\}$ is a class of measurable functions,
    $\{\boldsymbol{x}_i\}_{i=1}^{n_{\rm{tr}}}\stackrel{\mathrm{i.i.d.}}{\sim}p_{\rm{tr}}(\boldsymbol{x})$ be a fixed sample of size $n_{\rm{tr}}$ i.i.d. drawn from the marginal density $p_{\rm{tr}}(\boldsymbol{x})$,
    and $\{\sigma_1,\ldots,\sigma_{n_{\rm{tr}}}\}$ be the Rademacher variables.
    Let $\mathfrak{R}_{n_{\rm{tr}}}(\mathcal{F})$ be the Rademacher complexity of $\mathcal{F}$ which is defined as
    \begin{equation} \nonumber
        \mathfrak{R}_{n_{\rm{tr}}}(\mathcal{F})=\mathbb{E}\left[\sup_{f\in\mathcal{F}}\frac{1}{n_{\rm{tr}}}\sum_{i=1}^{n_{\rm{tr}}}\sigma_i f(\boldsymbol{x}_i) \right].
    \end{equation}
    Then we have
    \begin{align*}
        \mathfrak{R}_{n_{\rm{tr}}}({\ell}\circ\mathcal{G})\leq \frac{2\sqrt{2}m\mathcal{L}_{\ell}}{\alpha^2}\mathfrak{R}_{n_{\rm{tr}}}(\mathcal{F}).
    \end{align*}
    \label{lemma8}
    \end{lemma}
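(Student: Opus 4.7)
The plan is to bound $\mathfrak{R}_{n_{\rm{tr}}}(\ell\circ\mathcal{G})$ by peeling off the two Lipschitz layers in the composition $\boldsymbol{x}\mapsto \ell(\boldsymbol{T}(f(\boldsymbol{x})),\bar{y})$. The outer layer is the loss $\ell$, which by assumption is $\mathcal{L}_\ell$-Lipschitz in its first (vector) argument; the inner layer consists of the coordinate-wise transitions $T_j(\cdot)$, each of which is $2/\alpha^2$-Lipschitz by Lemma~\ref{lemma4}. Because the intermediate quantity $\boldsymbol{T}(f(\boldsymbol{x}))$ is vector-valued, a naive scalar contraction will not suffice: the correct tool is Maurer's vector-valued contraction inequality, which is what introduces the factor $\sqrt{2}$ appearing in the target bound.

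Concretely, I would proceed in three steps. First, apply Maurer's vector contraction inequality to the $\mathcal{L}_\ell$-Lipschitz map $\ell(\cdot,\bar{y})$ acting on the $m$-dimensional vector $\boldsymbol{T}(f(\boldsymbol{x}))$: this yields
\begin{equation}\nonumber
\mathfrak{R}_{n_{\rm{tr}}}(\ell\circ\mathcal{G})
\;\leq\; \sqrt{2}\,\mathcal{L}_\ell\,\mathbb{E}\!\left[\sup_{f\in\mathcal{F}}\frac{1}{n_{\rm{tr}}}\sum_{i=1}^{n_{\rm{tr}}}\sum_{j=1}^{m}\sigma_{ij}\,T_j(f(\boldsymbol{x}_i))\right],
\end{equation}
where $\{\sigma_{ij}\}$ are i.i.d.\ Rademacher variables. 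Second, use the sub-additivity of $\sup$ with respect to the sum over $j$ to split the double sum into $m$ terms, each of the form $\mathbb{E}[\sup_{f}\frac{1}{n_{\rm{tr}}}\sum_i \sigma_{ij} T_j(f(\boldsymbol{x}_i))]$. Third, on each of these $m$ scalar-valued expressions apply Talagrand's (classical) contraction lemma with Lipschitz constant $2/\alpha^2$, which is justified by Lemma~\ref{lemma4} together with the assumption $f(\boldsymbol{x})\in[0,1]$; this reduces each term to $(2/\alpha^2)\,\mathfrak{R}_{n_{\rm{tr}}}(\mathcal{F})$. Summing over the $m$ resulting contributions and combining with the prefactor $\sqrt{2}\,\mathcal{L}_\ell$ delivers exactly $\frac{2\sqrt{2}\,m\,\mathcal{L}_\ell}{\alpha^2}\mathfrak{R}_{n_{\rm{tr}}}(\mathcal{F})$.

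The main obstacle is really a bookkeeping one: making sure that the right form of contraction is invoked at the right layer. In particular, one must be careful that the vector contraction is applied \emph{once}, to the outermost Lipschitz map $\ell(\cdot,\bar{y})$, and that the scalar contraction is applied after the sum over $j$ has been extracted, so that each $T_j$ can be handled as a univariate Lipschitz function of $f(\boldsymbol{x})$. A secondary point worth checking is the implicit norm in the Lipschitz assumption on $\ell$: Maurer's inequality requires Lipschitzness with respect to the Euclidean norm on $\mathbb{R}^m$, which is the convention implied by the stated assumption. Once these compatibility issues are settled, the remaining estimates are straightforward and the constants line up exactly with the claimed bound.
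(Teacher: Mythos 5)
Your proposal matches the paper's proof step for step: Maurer's vector contraction inequality applied once to the $\mathcal{L}_\ell$-Lipschitz loss (giving the $\sqrt{2}\mathcal{L}_\ell$ factor and the double-indexed Rademacher variables), sub-additivity of the supremum to split the sum over $j$ into $m$ terms, and Talagrand's scalar contraction with the $2/\alpha^2$ constant from Lemma~\ref{lemma4} on each term. The argument and the constants line up exactly with the paper's derivation, so nothing further is needed.
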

    \begin{proof}
   In what follows, we upper-bound $\mathfrak{R}_{n_{\rm{tr}}}({\ell}\circ\mathcal{G})$. 
   Since $\ell(\boldsymbol{g}(\boldsymbol{x}), \bar{y})$ is $\mathcal{L}_{\ell}$-Lipschitz continuous w.r.t $\boldsymbol{g}$,
   according to the Rademacher vector contraction inequality \cite{maurer2016vector}, we have
   \begin{align}
        \mathfrak{R}_{n_{\rm{tr}}}({\ell}\circ\mathcal{G})&=\mathbb{E}\left[\sup_{\boldsymbol{g}\in\mathcal{G}}\frac{1}{n_{\rm{tr}}}\sum_{i=1}^{n_{\rm{tr}}}\sigma_i{\ell}(\boldsymbol{g}(\boldsymbol{x}_i),\bar{y}_i)\right]\notag\\
        &\leq \frac{\sqrt{2}\mathcal{L}_{\ell}}{n_{\rm{tr}}}\cdot\mathbb{E}\left[\sup_{\boldsymbol{g}\in\mathcal{G}}\sum_{i=1}^{n_{\rm{tr}}}\sum_{j=1}^{m}\sigma_{ij}g_j(\boldsymbol{x}_i)\right]\notag\\
        \label{rademacherupper}
        &\leq \frac{\sqrt{2}\mathcal{L}_{\ell}}{n_{\rm{tr}}}\cdot\sum_{j=1}^{m} \mathbb{E}\left[\sup_{\boldsymbol{g}\in\mathcal{G}}\sum_{i=1}^{n_{\rm{tr}}}\sigma_{ij}g_j(\boldsymbol{x}_i)\right],
    \end{align}
    where $g_j(\boldsymbol{x_i})$ is the $j$-th component of $\boldsymbol{g}(\boldsymbol{x_i})$, and $\sigma_{ij}$ are an $n_{\rm{tr}}\times m$ matrix of independent Rademacher variables.
    As shown in Lemma~\ref{lemma4}, $g_j(\boldsymbol{x}) = T_j(f(\boldsymbol{x}))$ and $T_j(f)$ is Lipschitz continuous w.r.t $f$ with a Lipschitz constant $2/\alpha^2$.
    Then we apply the Talagrand’s contraction lemma \cite{sshwartz14UML} and obtain
    \begin{equation}\nonumber
    \begin{split}
    \sum_{j=1}^{m} \mathbb{E}\left[\sup_{\boldsymbol{g}\in\mathcal{G}}\sum_{i=1}^{n_{\rm{tr}}}\sigma_{ij}g_j(\boldsymbol{x}_i)\right]
        &= \sum_{j=1}^m\mathbb{E}\left[\sup_{f \in \mathcal{F}}\sum_{i=1}^{n_{\rm{tr}}}\sigma_{ij}T_j(f(\boldsymbol{x_i}))\right]\\
        &\leq \frac{2}{\alpha^2}\sum_{j=1}^m\mathbb{E}\left[\sup_{f \in \mathcal{F}}\sum_{i=1}^{n_{\rm{tr}}}\sigma_{ij}f(\boldsymbol{x_i})\right]\\
        &=\frac{2mn_{\rm{tr}}}{\alpha^2}\mathfrak{R}_{n_{\rm{tr}}}(\mathcal{F}).
    \end{split}
    \label{rademacher_vector}
    \end{equation}
    By substituting it into \eqref{rademacherupper}, we complete the proof.
\end{proof}

    Based on Lemma \ref{lemma7} and Lemma \ref{lemma8}, the estimation error bound is proven through
    \begin{equation} \nonumber
        \begin{split}
    &R_{\rm{surr}}(\hat{f}_{\rm{surr}})-R_{\rm{surr}}(f^{\star}_{\rm{surr}})\\
    &~~~=\left(\widehat{R}_{\rm{surr}}(\hat{f}_{\rm{surr}}) - \widehat{R}_{\rm{surr}}(f^{\star}_{\rm{surr}})\right)+ \left(R_{\rm{surr}}(\hat{f}_{\rm{surr}}) - \widehat{R}_{\rm{surr}}(\hat{f}_{\rm{surr}})\right)+ \left(\widehat{R}_{\rm{surr}}(f^{\star}_{\rm{surr}})  - R_{\rm{surr}}(f^{\star}_{\rm{surr}})\right)\\
    &~~~\leq \left(R_{\rm{surr}}(\hat{f}_{\rm{surr}}) - \widehat{R}_{\rm{surr}}(\hat{f}_{\rm{surr}})\right) + \left(\widehat{R}_{\rm{surr}}(f^{\star}_{\rm{surr}})  - R_{\rm{surr}}(f^{\star}_{\rm{surr}})\right)\\
    &~~~\leq 2\sup_{f\in \mathcal{F}}|\widehat{R}_{\rm{surr}}(f)-R_{\rm{surr}}(f)|\\
            &~~~= 2\sup_{\boldsymbol{g}\in \mathcal{G}}|\widehat{R}_{\rm{surr}}(\boldsymbol{g})-R_{\rm{surr}}(\boldsymbol{g})|\\
            &~~~\leq 4\mathfrak{R}_{n_{\rm{tr}}}({\ell}\circ\mathcal{G})+2M_{\ell}\sqrt{\frac{\ln(2/\delta)}{2n_{\rm{tr}}}}\\
            &~~~\leq \frac{8\sqrt{2}m\mathcal{L}_{\ell}}{\alpha^2}\mathfrak{R}_{n_{\rm{tr}}}(\mathcal{F})+2M_{\ell}\sqrt{\frac{\ln(2/\delta)}{2n_{\rm{tr}}}},
        \end{split}
    \end{equation}
    where the second equality is due to that $\mathcal{G}=\{\boldsymbol{x}\mapsto \boldsymbol{T}(f(\boldsymbol{x}))\mid f \in \mathcal{F} \}$ and $\boldsymbol{T}(\cdot)$ is deterministic. 
    \qed
\section{Supplementary Information on the Experiments}
\label{setup}%
In this appendix, we provide supplementary information on the experiments.

\subsection{Datasets}
\label{setup-datasets}%
We describe details of the datasets as follows.

    \paragraph{MNIST} 
    This is a dataset of normalized grayscale images containing handwritten digits from 0 to 9. 
    All the images are fitted into a 28 $\times$ 28 pixels. 
    The total number of training images and test images is 60,000 and 10,000 respectively.
    We use the even digits as the positive class and odd digits as the negative class.
    
    \paragraph{Fashion-MNIST}
    This is a dataset of grayscale images of different types of modern clothes. 
    All the images are of the size 28 $\times$ 28 pixels. 
    Similar to MNIST, this dataset has 60,000 training images and 10,000 test images.
    We convert this 10-class dataset into a binary dataset as follows:
    \begin{itemize}
        \item The classes ‘Pullover’, ‘Dress’, ‘T-shirt’, ‘Trouser’, ‘Shirt’, ‘Bag’, ‘Ankle boot’ and ‘Sneaker’ are denoted as the positive class;
        \item The classes  ‘Coat’ and ‘Sandal’ are denoted as the negative class.
    \end{itemize}
    
    \paragraph{Kuzushiji-MNIST}
    This is a dataset of grayscale images of cursive Japanese (Kuzushiji) characters. 
    This dataset also has all images of size 28 $\times$ 28. 
    And the total number of training images and test images is 60,000 and 10,000 respectively.
    We convert this 10-class dataset into a binary dataset as follows:
    \begin{itemize}
        \item The classes ‘ki’, ‘re’, and ‘wo’ are denoted as the positive class;
        \item The classes ‘o’, ‘su’, ‘tsu’, ‘na’, ‘ha’, ‘ma’, and ‘ya’ are denoted as the negative class.
    \end{itemize}
    
    \paragraph{CIFAR-10}
    This dataset is made up of color images of ten types of objects and animals. 
    The size of all images in this dataset is 32 $\times$ 32.
    There are 5,000 training images and 1,000 test images for each class, so 50,000 training and 10,000 test images in total.
    We convert this 10-class dataset into a binary dataset as follows:
    \begin{itemize}
        \item The positive class consists of ‘airplane’, ‘bird’, ‘deer’, ‘dog’, ‘frog’, ‘cat’, and ‘horse’;
        \item The negative class consists of ‘automobile’, ‘ship’, and ‘truck’.
    \end{itemize}
    
The generation of each U set is the same for all four benchmark datasets. 
More specifically, given the number of U sets $m$, class priors $\{\pi_j\}_{j=1}^{m}$, and the set sizes $\{n_j\}_{j=1}^{m}$, for $j$-th U set, we go through the following process:
    \begin{enumerate}
        \item Randomly shuffle the benchmark dataset;
        \item Randomly select $n_j^p = n_j \times \pi_j$ samples of positive class;
        \item Randomly select $n_j^n= n_j - n_j^p$ samples of negative class;
        \item Combine them and we obtain the $j$-th U set.
    \end{enumerate}
    
\subsection{Models}
\label{setup-models}%
We describe details of the model architecture and optimization algorithm as follows.

    \paragraph{MLP} It is a 5-layer fully connected perceptron with ReLU \cite{nair2010rectified} as the activation function. The model architecture was $d-300-300-300-1$, where $d$ is the dimension of the input.
    Batch normalization \cite{ioffe2015batch} was applied before each hidden layer and $\ell_2$-regularization was added. Dropout \cite{srivastava2014dropout} with rate 0.2 was also added before each hidden layer. The optimizer was Adam \cite{kingma2014adam} with the default momentum parameters ($\beta_1 = 0.9$ and $\beta_2 = 0.999$).
    
    \paragraph{ResNet-32} It is a 32-layer residual network \cite{he16cvpr} and the architecture was as follows:\\
        0th (input) layer: $(32*32*3)-$ \\
        1st to 11th layers: $C(3*3, 16)-[C(3*3, 16), C(3*3, 16)]*5-$  \\
        12th to 21st layers: $[C(3*3, 32), C(3*3,32)]*5-$\\
        22nd to 31st layers: $[C(3*3, 64), C(3*3, 64)]*5-$\\
        32nd layer: Global Average Pooling$-1$,\\
    where $C(3*3, 96)$ represents a 96-channel of $3*3$ convolutions followed by a ReLU activation function, $[\cdot]$*2 represents a repeat of twice of such layer, $C(3*3, 96, 2)$ represents a similar layer but with stride 2, and $[\cdot, \cdot]$ represents a building block. Batch normalization was applied for each hidden layers and $\ell_2$-regularization was also added. The optimizer was Adam with the default momentum parameters ($\beta_1 = 0.9$ and $\beta_2 = 0.999$).
    
    The MLP model was used for the MNIST, Fashion-MNIST, Kuzushiji-MNIST dataset, and the ResNet-32 model was used for the CIFAR-10 dataset.

\subsection{Other Details}
\label{setup-otherdetail}%
 We implemented all the methods by Keras and conducted all the experiments on an NVIDIA Tesla P100 GPU. 
 The batch size was 256 for all the methods.
 For MNIST, Fashion-MNIST, and Kuzushiji MNIST dataset, the initial learning-rate was 1e-5 for U$^m$-SSC and 1e-4 for the MMC based methods and LLP-VAT. 
 For CIFAR-10 dataset, the initial learning-rate was 5e-6 for U$^m$-SSC and 1e-5 for the MMC based methods and LLP-VAT. 
 In addition, the learning rate was decreased by $1/(1+\textrm{decay}\cdot\textrm{epoch})$, where the decay parameter was 1e-4. 
 This is the built-in learning rate scheduler of Keras.
 
 We describe details of the hyper-parameters for the baseline methods as follows.
 \begin{itemize}
 \item MMC$\text{-}{\rm{U^2}\text{-}\rm{b}}$ \cite{scott2020learning}: by assuming that the number of sets $m=2k$, this baseline method firstly pairs all the U sets and then linearly combines the unbiased balanced risk estimator of each pair, The learning objective is 
 \begin{align*}
     \widehat{R}_{\text{MMC-}{\rm{U^2}\text{-}\rm{b}}}(f)=\sum\nolimits_{j=1}^{k}\omega_j\widehat{R}_{\rm{U^2}\text{-}\rm{b}}(f),
 \end{align*}
  where
\begin{align*}
&\widehat{R}_{\rm{U^2}\text{-}\rm{b}}(f)=
\frac{c_{b1}^+}{n}\sum_{i=1}^{n_1}
\ell_b(f(\boldsymbol{x}_{i}^1),+1)
-\frac{c_{b2}^+}{n}\sum_{j=1}^{n_2}
\ell_b(f(\boldsymbol{x}_{j}^2),+1)\notag\\
&~~~~~~~~~~~~~~~~
-\frac{c_{b1}^-}{n}\sum_{i=1}^{n_1}
\ell_b(f(\boldsymbol{x}_{i}^1),-1)
+\frac{c_{b2}^-}{n}\sum_{j=1}^{n_2}
\ell_b(f(\boldsymbol{x}_{j}^2),-1),
\end{align*}
$c_{b1}^+=\frac{1-\pi_2}{2(\pi_1-\pi_2)}$, $c_{b1}^-=\frac{\pi_2}{2(\pi_1-\pi_2)}$, $c_{b2}^+=\frac{1-\pi_1}{2(\pi_1-\pi_2)}$, and $c_{b2}^-=\frac{\pi_1}{2(\pi_1-\pi_2)}$.
 For the pairing process, since we use the uniform set sizes, i.e., the set size of each U set is the same as $n_{\rm{tr}}/m$ ($n_{\rm{tr}}=60,000$ in MNIST, Fashion-MNIST, and Kuzushiji-MNIST, $n_{\rm{tr}}=50,000$ in CIFAR-10), we pair all the U sets following Proposition 9 in Appendix S6 of \citet{scott2020learning}, i.e., match the U set with the largest class prior $\pi_j$ with the smallest, the U set with the second largest class prior $\pi_j$ with the second smallest, and so on.
 For the combination weights, we set them following Theorem 5 in Section 2.2 of \citet{scott2020learning}.
 More specifically, for the $j$-th pair of U sets: $\mathcal{X}_{\rm{tr}}^{1}$ and $\mathcal{X}_{\rm{tr}}^{2}$, assume $\pi_1>\pi_2$, since we use uniform set sizes, the optimal weights $\omega_j\propto(\pi_1-\pi_2)^2$.
 So we set the weight $\omega_j$ as $(\pi_1-\pi_2)^2$ and then normalize all of them to sum to 1, i.e., $\sum_{j=1}^k\omega_j=1$.
 
\item MMC$\text{-}{\rm{U^2}}$: this method improves the MMC$\text{-}{\rm{U^2}\text{-}\rm{b}}$ baseline by replacing the unbiased balanced risk estimator $\widehat{R}_{\rm{U^2}\text{-}\rm{b}}(f)$ with the unbiased risk estimators $\widehat{R}_{\rm{U^2}}(f)$ \cite{lu2018minimal}. The learning objective is
 \begin{align*}
     \widehat{R}_{\text{MMC-}{\rm{U^2}}}(f)&=\sum\nolimits_{j=1}^{k}\omega_j\widehat{R}_{\rm{U^2}}(f),
 \end{align*}
 where
 \begin{align*}
     &\widehat{R}_{\rm{U^2}}(f)=
\underbrace{
\frac{c_1^+}{n}\sum_{i=1}^{n_1}
\ell_b(f(\boldsymbol{x}_{i}^1),+1)
-\frac{c_2^+}{n}\sum_{j=1}^{n_2}
\ell_b(f(\boldsymbol{x}_{j}^2),+1)}_{\widehat{R}_{\rm{U^2}\text{-}\rm{p}}(f)}\notag\\
&~~~~~~~~~~~~~~~~\underbrace{
-\frac{c_1^-}{n}\sum_{i=1}^{n_1}
\ell_b(f(\boldsymbol{x}_{i}^1),-1)
+\frac{c_2^-}{n}\sum_{j=1}^{n_2}
\ell_b(f(\boldsymbol{x}_{j}^2),-1)}_{\widehat{R}_{\rm{U^2}\text{-}\rm{n}}(f)},
\end{align*}
$c_1^+=\frac{(1-\pi_2)\pi_{\mathcal{D}}}{\pi_1-\pi_2}$, $c_1^-=\frac{\pi_2(1-\pi_{\mathcal{D}})}{\pi_1-\pi_2}$, $c_2^+=\frac{(1-\pi_1)\pi_{\mathcal{D}}}{\pi_1-\pi_2}$, and $c_2^-=\frac{\pi_1(1-\pi_{\mathcal{D}})}{\pi_1-\pi_2}$.
The pairing process and the combination weights setup follow those of MMC$\text{-}{\rm{U^2}\text{-}\rm{b}}$.
 
\item MMC$\text{-}{\rm{U^2}\text{-}\rm{c}}$: this method improves the MMC$\text{-}{\rm{U^2}}$ baseline by replacing the unbiased risk estimators $\widehat{R}_{\rm{U^2}}(f)$ with the non-negative risk estimators $\widehat{R}_{\rm{U^2}\text{-}\rm{c}}(f)$ \cite{lu2020mitigating}. The learning objective is
 \begin{align*}
     \widehat{R}_{\text{MMC-}{\rm{U^2}\text{-}\rm{c}}}(f)=\sum\nolimits_{j=1}^{k}\omega_j\widehat{R}_{\rm{U^2}\text{-}\rm{c}}(f),
 \end{align*}
 where
 \begin{align*}
\widehat{R}_{\rm{U^2}\text{-}\rm{c}}(f)= f_\mathrm{c}(\widehat{R}_{\rm{U^2}\text{-}\rm{p}}(f))+f_\mathrm{c}(\widehat{R}_{\rm{U^2}\text{-}\rm{n}}(f)).
\end{align*}
According to \citet{lu2020mitigating}, the \emph{generalized leaky ReLU} function, i.e., \[f_\mathrm{c}(r)=
  \begin{cases}
    r&(r\geq 0),\\
    -\kappa r&(r< 0),\\
  \end{cases}
  \]
  for $\kappa\geq 0$, works well as the correction function $f_\mathrm{c}$, so we choose it for implementing this baseline method.
  The hyper-parameter $\kappa$ was chosen based on a validation dataset, and the pairing process and the combination weights setup follow those of MMC$\text{-}{\rm{U^2}\text{-}\rm{b}}$.
  
\item LLP-VAT \cite{tsai2020learning}: this baseline method is based on empirical proportion risk minimization. The learning objective is
\begin{align*}
\widehat{R}_{\rm{prop}\text{-}\rm{c}}(f)=\widehat{R}_{\rm{prop}}(f)+\alpha\ell_{\rm{cons}}(f),
\end{align*}
where 
\begin{align*}
\widehat{R}_{\rm{prop}}(f)=\sum\nolimits_{j=1}^{m}d_{\rm{prop}}(\pi_j, \hat{\pi}_j)
\end{align*}
is the proportion risk,
$\pi_j$ and
\begin{align*}
  \hat{\pi}_j=\frac{1}{n_j}\sum_{i=1}^{n_j}\frac{1+\sign(f(\boldsymbol{x}_i^j)-1/2)}{2}  
\end{align*}
are the true and predicted label proportions for the $j$-th U set $\mathcal{X}_{\rm{tr}}^{j}$, 
$d_{\rm{prop}}$ is a distance function, and
\begin{align*}
    \ell_{\rm{cons}}(f)=d_{\rm{cons}}(f(\boldsymbol{x}), f(\boldsymbol{\hat{x}}))
\end{align*}
is the consistency loss, $d_{\rm{cons}}$ is a distance function, $\boldsymbol{\hat{x}}$ is a perturbed input from the original one $\boldsymbol{x}$.
We set the hyper-parameters $\alpha=0.05$ and the perturbation weight $\mu=6.0$ for LLP-VAT following the default implementation in their paper \cite{tsai2020learning}.
\end{itemize}
 

\section{Supplementary Experimental Results}
\label{suppresults}%
In this appendix, we provide supplementary experimental results.

\subsection{Comparison with State-of-the-art Methods}
\label{negativeloss}%
Please find Table \ref{test_10_50_table} the final classification errors of comparing our proposed method with state-of-the-art methods on learning from 10, 25, and 50 U sets (corresponds to Figure~\ref{fig:performance}).

\begin{table*}[t]\centering
    \caption{Means (standard deviations) of the classification error over three trials in percentage of each method on learning from 10, 25 and 50 U sets. Best and comparable methods (paired \textit{t}-test at significance level 5\%) are highlighted in boldface.}
    \vspace{1ex}
    \newcommand{\tabincell}[2]{\begin{tabular}{@{}#1@{}}#2\end{tabular}}
        \begin{tabular}{ c|c | c c c c c} 
        \hline
        Dataset & \tabincell{c}{Sets} & \tabincell{c}{MMC$\text{-}{\rm{U^2}\text{-}\rm{b}}$} & \tabincell{c}{MMC$\text{-}{\rm{U^2}}$} & \tabincell{c}{MMC$\text{-}{\rm{U^2}\text{-}\rm{c}}$} & \tabincell{c}{LLP-VAT} & \tabincell{c}{U$^m$-SSC} \\ 
        \hline
        \multirowcell{3}{MNIST}      
        & 10 & 7.7(0.55) & 8.03(0.74) & 4.46(0.23) & 3.62(0.38) & \textbf{3.05(0.08)}\\
        & 25 & 5.35(0.22) & 5.32(0.28) & 3.69(0.11) & 3.28(0.35) & \textbf{2.51(0.02)}\\
        & 50 & 5.81(0.22) & 5.82(0.12) & 3.29(0.09) & 3.02(0.22) & \textbf{2.86(0.04)}\\
		\hline
		\multirowcell{3}{Fashion-\\MNIST}      
		& 10 & 16.63(1.38) & 9.49(0.37) & 8.12(0.51) & 21.23(3.52) & \textbf{6.5(0.21)}\\
        & 25 & 11.1(0.45) & 9.12(0.1) & 7.45(0.1) & 26.66(0.4) & \textbf{6.14(0.02)}\\
        & 50 & 11.18(0.53) & 9.6(0.47) & 8.52(0.48) & 27.92(2.22) & \textbf{6.6(0.06)}\\
		\hline
		\multirowcell{3}{Kuzushiji-\\MNIST}      
        & 10 & 16.25(0.61) & 15.23(0.3) & 12.88(0.35) & 16.12(0.41) & \textbf{9.83(0.4)}\\
        & 25 & 15.93(0.71) & 14.02(0.12) & 10.18(0.33) & 19.48(1.84) & \textbf{8.98(0.07)}\\
        & 50 & 15.8(0.37) & 12.46(0.43) & 9.69(0.37) & 18.94(0.4) & \textbf{8.97(0.52)}\\
		\hline
		\multirowcell{3}{CIFAR-10}      
        & 10 & 15.83(0.21) & 16.01(0.32) & 14.33(0.06) & 19.38(0.05) & \textbf{13.43(0.14)}\\
        & 25 & 19.6(0.77) & 16.18(0.27) & 14.19(0.25) & 16.89(0.15) & \textbf{13.31(0.13)}\\
        & 50 & 21.1(1.03) & 16.08(0.38) & 14.28(0.13) & 17.66(0.57) & \textbf{13.32(0.19)}\\
        \hline
        \end{tabular}
        \label{test_10_50_table}
    \end{table*}

In the experiments, we also find that the empirical training risk of the proposed U$^m$-SSC is obviously higher than all other baseline methods. This is due to the added transition layer and the rescales the output range. We provide a detailed explanation as follows.

By using the monotonicity of the transition function $T_j(\cdot)$ \cite{menon15icml}, we can compute the range of the model output. 
Since $g(\boldsymbol{x})\in [0, 1]$, by plugging in $g(\boldsymbol{x})=0$ and $g(\boldsymbol{x})=1$ respectively we obtain
    \begin{equation}\nonumber
        \begin{split}
            &T_j(0) = \frac{b_j}{d}=\frac{\rho_j\pi_{\mathcal{D}}(1-\pi_j)}{\sum_{j=1}^m\rho_j\pi_{\mathcal{D}}(1-\pi_j)},\\
            &T_j(1) = \frac{a_j + b_j}{c_j+d}=\frac{\rho_j\pi_{j}(1-\pi_{\mathcal{D}})}{\sum_{j=1}^m\rho_j\pi_{j}(1-\pi_{\mathcal{D}})}. 
        \end{split}
    \end{equation} 
According to our generation processes of class priors and set size, $\pi_j \in [0.1, 0.9]$ and $\rho_j = 1/m$ for any $j=1,\ldots,m$. 
The upper bound of the model output $\max(T_j(0),T_j(1))$ takes value between 0.01 and 0.1. 
As a result, the cross-entropy loss gives its value in range $[2.3, 4.6]$, which is relatively high than usual training loss.
We note that this high training loss has an effect on hyper-parameters tuning, especially for the learning rate. 
We may need a relatively small learning rate for better performance of our method.
    
\subsection{Robustness against Inaccurate Class Priors}
\label{inaccurate}
Please find Table \ref{robust_table} the final classification errors of our method on learning from 50 U sets with inaccurate class priors (corresponds to Figure~\ref{fig:robust}).

    \begin{table}[t]\centering
    \caption{Means (standard deviations) of the classification error over three trials in percentage for the U$^m$-SSC method tested on inaccurate class priors.}
        \newcommand{\tabincell}[2]{\begin{tabular}{@{}#1@{}}#2\end{tabular}}
        \begin{tabular}{ c|c | c c c c c} 
        \hline
        Dataset & \tabincell{c}{Sets} & \tabincell{c}{True} & \tabincell{c}{$\epsilon=0.05$} & \tabincell{c}{$\epsilon=0.1$} & \tabincell{c}{$\epsilon=0.15$} & \tabincell{c}{$\epsilon=0.2$}\\ 
        \hline
        \multirowcell{2}{MNIST}      
        & 10 & 2.54(0.02) & 2.64(0.06) & 3.31(0.18) & 2.98(0.14) & 3.84(0.25) \\
		& 50 & 2.45(0.04) & 2.52(0.02) & 2.69(0.04) & 3.11(0.19) & 3.16(0.13)\\
		\hline
		\multirowcell{2}{Fashion-\\MNIST}      
		& 10 & 6.22(0.05) & 6.31(0.03) & 6.13(0.13) & 6.61(0.04) & 9.39(0.19) \\
		& 50 & 6.37(0.26) & 6.39(0.17) & 6.76(0.11) & 7.64(0.22) & 10.91(0.47)\\
		\hline
		\multirowcell{2}{Kuzushiji-\\MNIST}      
        & 10 & 8.74(0.24) & 8.97(0.23) & 9.77(0.29) & 11.31(0.21) & 11.62(0.56) \\
		& 50 & 9.0(0.22) & 9.27(0.26) & 9.15(0.15) & 9.38(0.18) & 10.61(0.03)\\
		\hline
		\multirowcell{2}{CIFAR-10}      
        & 10 & 13.54(0.23) & 13.7(0.25) & 14.43(0.22) & 16.82(0.29) & 19.7(0.54) \\
		& 50 & 13.55(0.18) & 13.75(0.09) & 14.19(0.22) & 15.69(0.21) & 18.84(0.26)\\
        \hline
        \end{tabular}
        \label{robust_table}
    \end{table}

\end{document}